\newcommand{\A}{\mathcal{A}}
\newtheorem{theorem}{Theorem}
\newtheorem{lemma}[theorem]{Lemma}
\newtheorem{definition}{Definition}
\newcommand{\algref}[1]{Alg.~\ref{#1}}
\newcommand{\thmref}[1]{Theorem~\ref{#1}}
\renewcommand{\S}{\mathcal{S}}
\renewcommand\ind[1]{\ensuremath{\mathds{1}\left[#1\right]}}
\renewcommand{\E}[2]{\mathbf{E}_{#1}\left[#2\right]}
\newcommand{\interior}[1]{%
  {\kern0pt#1}^{\mathrm{o}}%
}
\newcommand{\overbar}[1]{\mkern 1.5mu\overline{\mkern-1.5mu#1\mkern-1.5mu}\mkern 1.5mu}
\newcommand\gwp{\overset{\mathclap{\normalfont\mbox{\small{w.p. 1-p}}}}{\geq}}
\definecolor{C0}{HTML}{1F77B4}
\definecolor{C1}{HTML}{FF7F0E}
\definecolor{C2}{HTML}{2ca02c}
\definecolor{C3}{HTML}{d62728}
\definecolor{C4}{HTML}{9467bd}
\definecolor{C5}{HTML}{8c564b}
\definecolor{C6}{HTML}{e377c2}
\definecolor{C7}{HTML}{7F7F7F}
\definecolor{C8}{HTML}{bcbd22}
\definecolor{C9}{HTML}{17BECF}
\title{Task-agnostic Exploration in Reinforcement Learning
}
\author{Xuezhou Zhang\\
	UW-Madison\\
	\texttt{xzhang784@wisc.edu}\\
	\And
	Yuzhe Ma\\
	UW-Madison\\
	\texttt{ma234@wisc.edu}\\
	\And
	Adish Singla\\
	MPI-SWS\\
	\texttt{adishs@mpi-sws.org}
}
\begin{document}
	\maketitle
	\begin{abstract}
		Efficient exploration is one of the main challenges in reinforcement learning (RL). Most existing sample-efficient algorithms assume the existence of a single reward function during exploration. In many practical scenarios, however, there is not a single underlying reward function to guide the exploration, for instance, when an agent needs to learn many skills simultaneously, or multiple conflicting objectives need to be balanced. To address these challenges,  we propose the \textit{task-agnostic RL} framework: In the exploration phase, the agent first collects trajectories by exploring the MDP without the guidance of a reward function. After exploration, it aims at finding near-optimal policies for $N$ tasks, given the collected trajectories augmented with \textit{sampled rewards} for each task. We present an efficient task-agnostic RL algorithm, \textsc{UCBZero}, that finds $\epsilon$-optimal policies for $N$ arbitrary tasks after at most $\tilde O(\log(N)H^5SA/\epsilon^2)$ exploration episodes. We also provide an $\Omega(\log (N)H^2SA/\epsilon^2)$ lower bound, showing that the $\log$ dependency on $N$ is unavoidable. Furthermore, we provide an $N$-independent sample complexity bound of \textsc{UCBZero} in the statistically easier setting when the ground truth reward functions are known.
	\end{abstract}

\section{Introduction}
Efficient exploration is widely regarded as one of the main challenges in reinforcement learning (RL). Recent works in theoretical RL have provided near-optimal algorithms in both model-based \cite{jaksch2010near,azar2017minimax,zanette2019tighter} and model-free \cite{strehl2006pac,jin2018q,zhang2020almost} paradigms, that are able to learn a near-optimal policy with a sample complexity that matches the information-theoretical lower bound. However, these algorithms are designed to solve a single pre-defined task and rely on the assumption that a well-defined reward function is available during exploration. In such settings, policy optimization can be performed simultaneously with exploration and results in an effective exploration-exploitation trade-off. 

In many real-world applications, however, a pre-specified reward function is not available during exploration. For instance, in recommendation systems, reward often consists of multiple conflicting objectives, and the balance between them is tuned via continuous trial and error to encourage the desired behavior \cite{zheng2018drn}; In hierarchical and multi-task RL \cite{dietterich2000hierarchical,tessler2017deep, oh2017zero}, the agent aims at simultaneously learning a set of skills; In the robotic navigation problem \cite{rimon1992exact,kretzschmar2016socially}, the agent needs to navigate to not only one goal state, but a set of states in the environment.
Motivated by these real-world challenges, we ask: \textit{Is it possible to efficiently explore the environment and simultaneously learn a set of potentially conflicting tasks?}

To answer this question, we present a novel \textit{task-agnostic RL} paradigm: During the exploration phase, the agent collects trajectories from an
MDP \textit{without} the guidance of pre-specified reward function. 
Next, in the policy-optimization phase, it aims at finding near-optimal policies for $N$ tasks, given the collected trajectories augmented with the \textit{instantiated rewards} sampled from the unknown reward function for each task. 
We emphasize that our framework covers applications like recommendation systems, where stochastic objectives (click rate, dwell time, etc.) are observed with the transitions, but the ground-truth reward functions are not known.
There is a common belief in task-specific RL that estimating the reward function is not a statistical bottleneck. 
We will show that, in the more challenging task-agnostic setting, the need to estimate rewards makes the problem strictly harder.

To address the task-agnostic RL problem, we present an efficient algorithm, \textsc{UCBZero}, that can explore the MDP without the guidance of a pre-specified reward function. The algorithm design adopts the \textit{Optimism Under Uncertainty} principle to perform exploration and uses a conceptually simpler Azuma-Hoeffding-type reward bonus, instead of a Bernstein-Friedman-type reward bonus that typically achieves better bounds in the task-specific RL framework. The advantage of an Hoeffding-type bonus is that it only depends on the number of visitations to the current state-action pair, but not on the reward value. This is a key property that allows it to be used in the task-agnostic setting. The \textsc{UCBZero} algorithm is defined in \algref{alg:UCBZero}. 
Despite its simplicity, UCBZero provably finds $\epsilon$-optimal policies for $N$ arbitrary tasks simultaneously after at most $\tilde O( \log(N)H^5SA/\epsilon^2)$ exploration episodes. To complement our algorithmic results, we establish a near-matching $\Omega(\log(N)H^2SA/\epsilon^2)$ lower bound, demonstrating the near-optimality of \textsc{UCBZero} as well as the necessity of the dependency on $N$, which is unique to the task-agnostic setting.

\textbf{Our Contributions:}
\begin{enumerate}[leftmargin=*, nolistsep]
	\item We propose a novel \textit{task-agnostic RL} framework and present an algorithm, \textsc{UCBZero}, that finds near-optimal polices to all $N$ tasks with small sample complexity. In addition, we provide a near-matching lower bound, demonstrating the near-optimality of our algorithm.
	\item We investigate interesting properties of the \textsc{UCBZero} algorithm that shine a light on its success. In particular, we show that (i) \textsc{UCBZero} is able to \textbf{visit all state-action pairs} sufficiently often; (ii) the transitions collected by \textsc{UCBZero} allows one to obtain \textbf{accurate estimate of the dynamics model}.
	\item \looseness-1 Lastly, we provide an $N$-independent sample complexity bound of \textsc{UCBZero} in a statistically easier setting studied in a contemporary work \cite{jin2020reward}, where all ground truth reward functions are known. 
	We provide a unified view of the two settings and present detailed discussion contrasting the statistical challenges between the two. Table \ref{tab:summary} summarizes our key theoretical results.
\end{enumerate}

\begin{table}[t!]\label{tab:summary}
	\begin{center}
		\renewcommand\arraystretch{1.6}
		\begin{tabular}{| l | l c | c c | c |}
			\hline
			\textbf{Setting} & No. of tasks &&Upper bound &&Lower bound\\
			\hline
			\textbf{Task-specific RL} & $N=1$ && $\tilde O\left(\frac{H^2SA}{\epsilon^2}\right)$ [ORLC]&& $ \Omega\left(\frac{H^2SA}{\epsilon^2}\right)$\\
			\hline
			\multirow{2}{2.7cm}{\textbf{Reward-free RL}}  & \cellcolor{gray!25}$N$ dep. &\cellcolor{gray!25}& \cellcolor{gray!25}$\tilde O\left(\frac{\log(N)H^5SA}{\epsilon^2}\right)$ [\textsc{UCBZero}] &\cellcolor{gray!25}& -\\
			\hhline{|~|-|-|-|-|-|}
			& $N$ indep. && $\tilde O\left(\frac{H^5S^2A}{\epsilon^2}\right)$ [RFE] && $\Omega\left(\frac{H^2S^2A}{\epsilon^2}\right)$\\
			\hhline{|-|-|-|-|-|-|}
			\multirow{2}{2.7cm}{\textbf{Task-agnostic RL}} & \cellcolor{gray!25}$N$ dep. &\cellcolor{gray!25}& \cellcolor{gray!25}$\tilde O\left(\frac{\log(N)H^5SA}{\epsilon^2}\right)$ [\textsc{UCBZero}] &\cellcolor{gray!25}& \cellcolor{gray!25}$\Omega(\frac{\log (N)H^2SA}{\epsilon^2})$\\
			\hhline{|~|-|-|-|-|-|}
			& \cellcolor{gray!25}$N$ indep. &\cellcolor{gray!25}& \cellcolor{gray!25} $\infty$&\cellcolor{gray!25}&\cellcolor{gray!25}$\infty$\\
			\hline
		\end{tabular}
	\end{center}
	\caption{Comparison of sample complexity results in three RL settings. Shaded cells represent our results.}
		\vspace{-5mm}
\end{table}

\section{Related Work}
\textbf{Sample-efficient algorithms for the task-specific RL.}
There has been a long line of work that design provably efficient algorithms for the task-specific RL setting. In the model-based setting, UCRL2 \cite{jaksch2010near} and PSRL \cite{agrawal2017optimistic} estimates the transition dynamics of the MDP with upper-confidence bounds (UCB) added to encourage exploration. UCBVI \cite{azar2017minimax} and later ORLC \cite{dann2018policy} improves the sample complexity bound to match with the information-theoretic lower bound $O(H^2SA/\epsilon^2)$ \cite{dann2015sample}. In the model-free setting, delayed Q-learning \cite{strehl2006pac} is the first sample efficient algorithm. UCB-H, UCB-B \cite{jin2018q} and later UCB-Advantage \cite{zhang2020almost} improve upon previous results and matches the information-theoretical lower-bound up to log factors and lower-order terms. Our algorithm \textsc{UCBZero} can be viewed as a zero-reward version of the UCB-H algorithm and is shown to achieve the same sample complexity as UCB-H when there is a single task, despite not having access to reward information during exploration. Our results suggest that a zero-reward version of other sample-efficient algorithms, such as UCB-B and UCB-Advantage, can potentially be adapted to the task-agnostic setting to achieve tighter bounds.

\textbf{Empirical study of task-agnostic RL.}
In the Deep Reinforcement Learning (DRL) community, there has been an increasing interest in designing algorithms that can learn without the supervision of a well-designed reward function. Part of the motivation comes from the ability of humans and other mammals to learn a variety of skills without explicit external rewards. Task-agnostic RL is studied in a variety of settings, including robotic control \cite{riedmiller2018learning, finn2017deep}, self-supervised model learning \cite{xie2018few, xie2019improvisation}, general value function approximation (GVFA) \cite{schaul2015universal}, multi-task meta-learning \cite{saemundsson2018meta}, etc. Even when one only cares about solving a single task, auxiliary navigation tasks have been shown to help solving problems with sparse rewards, in algorithms such as Hindsight Experience Replay (HER) \cite{andrychowicz2017hindsight} and Go-Explore \cite{ecoffet2019go}. Our work provides a theoretical foundation for these empirically successful algorithms.

\textbf{Closely related contemporary work.}
A closely related contemporary work proposes the \textit{reward-free RL} setting \cite{jin2020reward}. Their setting differs from ours mainly in that they assume the availability of true reward functions during the policy optimization phase, whereas we only assume the availability of sampled rewards on the collected transitions.
In the task-specific RL setting, having to estimate reward is typically not considered a statistical bottleneck, since estimating the reward function boils down to estimating an additional $SA$ parameters, which is negligible compared to the estimation of transition model of size $S^2A$. However, when there are $N$ tasks, the parameters of reward functions now have a total size of $SAN$. When $N$ is large, this estimation error becomes non-negligible. Therefore, our setting can be statistically more challenging than the \textit{reward-free} ssetting. We provide a thorough comparison between the two frameworks in Section \ref{sec: compare}.

\section{Preliminaries}
In this paper, we consider the setting of a tabular episodic Markov decision process, MDP$(\S, \A, H, P, r)$, where $\S$ is the
state space of size $S$, $\A$ is the action space of size $A$, $H$ is the number of steps in each episode,
$P$ is the time-dependent transition matrix so that $P_h(\cdot|s, a)$ gives the distribution over the next state if action
$a$ is taken from state $s$ at step $h\in[H]$, and $r_h(\cdot|s,a)$ is a stochastic reward function at step $h$ whose support is bounded in $[0,1]$.
Without loss of generality, we assume that the MDP has a fixed starting state $s_1$. The general case reduces to this case by adding an additional time step at the beginning of each episode.

A policy $\pi$ is a collection of $H$ functions 
$\{\pi_h: \S \rightarrow \Delta_\A\}_{h\in H}$, where $\Delta_\A$ is the probability simplex over $\A$. We denote $V^\pi_h:\S \rightarrow \R$ as the value function at step $h$ under policy $\pi$, and $Q^\pi_h:\S\times \A \rightarrow \R$ as the action-value function at step $h$ under policy $\pi$, i.e.
\begin{equation*}
V_h^\pi(s) = \E{\pi}{\sum_{h'=h}^H r_{h'}(s_{h'},a_{h'}) | s_h=s},\;Q_h^\pi(s,a) = \E{\pi}{\sum_{h'=h}^H r_{h'}(s_{h'},a_{h'})|s_h=s, a_h = a}
\end{equation*}

Since the total reward is finite, there exists an optimal policy $\pi^*$ that gives the maximum value $V_h^*(s) = \max_\pi V_h^\pi(s)$ for all $s\in S, h\in H$. Denoting $[\Pr_{h}V_{h+1}](s,a)\coloneqq \E{s'\sim P_h(\cdot|s,a)}{V_{h+1}(s')}$, we have the following Bellman's equation and Bellman's optimal equation:
\begin{eqnarray}
V_h^\pi(s) = Q^\pi_h(s,\pi_h(s)),&\;& Q^\pi_h(s,a) = \E{}{r_h(s,a)} + \Pr_{h}V^\pi_{h+1}(s,a)\\
V_h^*(s) = \max_a Q^*_h(s,a),&\;& Q^*_h(s,a) = \E{}{r_h(s,a)} + \Pr_{h}V^*_{h+1}(s,a)
\end{eqnarray}
where we define $V^\pi_{H+1}(s) = 0$ for all $s\in S$.
In this work, we evaluate algorithms based on the \textbf{sample complexity of exploration} framework \cite{kakade2003sample}, where the goal is to find an $\epsilon$-optimal policy $\pi$, i.e. $V_1^*(s_1)-V_1^\pi(s_1)\leq \epsilon$, with probability at least $(1-p)$. An algorithm is said to be \textbf{PAC-MDP} (Probably Approximately Correct in Markov Decision Processes) if for any $\epsilon,p$, the sample complexity of finding an $\epsilon$-optimal policy with probability at least $(1-p)$ is $O(\mbox{poly}(H,S,A,1/\epsilon,1/p))$.

\textbf{Task-agnostic RL.} In this paper, we study the setting of \textit{task-agnostic RL} where learning is performed in two phases. In the \textit{exploration phase}, the algorithm interacts with the envionment for $K$ episodes without the guidance of a reward, and collects a dataset of transitions $\mathcal D = \{(s_h^k,a_h^k)\}_{h\in [H], k\in [K]}$. In the \textit{policy-optimization phase}, for each task $n\in[N]$, let $r_h^{(n)}(\cdot|s, a)$ be the unknown reward function for task $n$, and rewards are instantiated on the collected transitions, augmenting the dataset to be $\mathcal D^{(n)} = \{(s_h^k,a_h^k, r_h^k)\}_{h\in [H], k\in [K]}$, where $r_h^k\sim r_h^{(n)}(\cdot|s_h^k,a_h^k)$. The goal is to find $\epsilon$-optimal policies for all $N$ tasks, and algorithms are evaluated based on the number of episodes $K$ needed to reliably achieve this goal. We remark that our setting generalizes and is more challenging than the standard task-specific RL setting and the \textit{reward-free RL} setting in the contemporary work \cite{jin2020reward}; importantly, our algorithm and upper bounds apply to both those settings.

\section{\textsc{UCBZero}: Task-agnostic Exploration}
\begin{algorithm}[ht!]
	\caption{\textsc{UCBZero}}\label{alg:UCBZero}
	\begin{flushleft}
		\textbf{PARAMETERS:} No. of tasks $N$, $\iota = \log(SAHK/p)$, $b_t = c\sqrt{H^3(\log(N)+\iota)/t}, \alpha_t = \frac{H+1}{H+t}$.\\
		$\mbox{ }$\\
		\textbf{Exploration Phase:}
	\end{flushleft}
	\begin{algorithmic}[1]
		\STATE initialize $\overbar Q_h(s,a)\leftarrow H$, $N_h(s,a) \leftarrow 0$ for all $(s,a,h)\in S \times A \times H$.
		\FOR{episode $k = 1,...,K$}
		\STATE receive $s_1^k$.
		\FOR{step $h = 1,...,H$}
		\STATE Take action $a^k_h \leftarrow \argmax_a
		\overbar Q_h(s^k_h, a)$, and observe $s^k_{h+1}$.
		\STATE $t = N_h(s^k_h,a^k_h)\leftarrow N_h(s^k_h,a^k_h) + 1$. $\overbar V_{h+1}(s^k_{h+1}) = \min(H,\max_a \overbar Q_{h+1}(s^k_{h+1},a))$.
		\STATE $\overbar Q_h(s^k_h,a^k_h)\leftarrow (1 - \alpha_t) \overbar Q_h(s^k_h, a^k_h) + \alpha_t
		[\overbar V_{h+1}(s^k_{h+1}) + 2b_t]$.
		\ENDFOR
		\ENDFOR
		\STATE \textbf{Return} $\mathcal D = \{(s_h^k,a_h^k)\}_{h\in [H], k\in [K]}$.
	\end{algorithmic}
	\begin{flushleft}
		$\mbox{ }$\\
		\textbf{Policy-Optimization Phase for Task $\mathbf{n\in [N]}$:}\\
		\textbf{INPUTS:} task-specific reward-augmented transitions $\mathcal D^{(n)} = \{(s_h^k,a_h^k, r_h^k)\}_{h\in [H], k\in [K]}$.
	\end{flushleft}
	\begin{algorithmic}[1]
		\STATE initialize $\Pi = \emptyset$, $Q_h(s,a)\leftarrow H$, $N_h(s,a) \leftarrow 0$ for all $(s,a,h)\in S \times A \times H$. 
		\FOR{episode $k = 1,...,K$}
		\STATE $\Pi$.append($\pi_k$), where $\pi_k$ is the greedy policy w.r.t. the current $\{Q_h\}_{h\in[H]}$.
		\FOR{step $h = 1,...,H$}
		\STATE $t = N_h(s^k_h,a^k_h)\leftarrow N_h(s^k_h,a^k_h) + 1$, $V_{h+1}(s^k_{h+1}) = \min(H,\max_a Q_{h+1}(s^k_{h+1},a))$.
		\STATE $Q_h(s^k_h,a^k_h)\leftarrow (1 - \alpha_t) Q_h(s^k_h, a^k_h) + \alpha_t
		[r_h^k+V_{h+1}(s^k_{h+1}) + b_t]$.
		\ENDFOR
		\ENDFOR
		\STATE \textbf{Return} the (non-stationary) stochastic policy equivalent to uniformly sampling from $\Pi$.
	\end{algorithmic}
\end{algorithm}
We begin by presenting our algorithm, \textsc{UCBZero}, as defined in \algref{alg:UCBZero}. In the task-agnostic RL setting, the algorithm needs to handle both the \textit{exploration phase} and the \textit{policy-optimization phase}. In the exploration phase, \textsc{UCBZero} maintains a pseudo-$Q$ table, denoted $\overbar Q_h$, that estimates the cumulative UCB bonuses from the current step $h$. By acting greedily with respect to $\overbar Q_h$ in step $h$, the algorithm will choose actions that can lead to under-explored states in future steps $h'>h$, i.e. states with large UCB bonus. 
In constrast to the original UCB-H algorithm which incorporates task-specific rewards and thus performs an exploration-exploitation trade-off, \textsc{UCBZero} is a full-exploration algorithm and, therefore, is able to keep visiting all \textit{reachable states} throughout the exploration phase.

In the policy-optimization phase, \textsc{UCBZero} performs the same optimistic Q-learning update, only with smaller UCB bonuses than the ones in the exploration phase. For the sake of theoretical analysis, at the end of the policy-optimization phase, \textsc{UCBZero} outputs a non-stationary stochastic policy equivalent to uniformly sampling from $\{\pi_k\}_{k\in[K]}$, the greedy policies w.r.t. the Q table at the beginning of each episode $k\in K$.
We remark that both the exploration phase and policy-optimization phase of \textsc{UCBZero} are \textbf{model-free}, enjoying smaller space complexity than model-based algorithms, including the RFE algorithm in the comtemporary work \cite{jin2020reward}. 
Next, we present our theoretical results upper-bounding the number of exploration episodes required for \textsc{UCBZero} to find $\epsilon$-optimal policies for $N$ tasks. Due to space constraints, we discuss key ideas in proving the theoretical results and defer the full proofs to the appendix.

\subsection{Upper-bounding the Sample Complexity of \textsc{UCBZero}}
The exploration phase of \textsc{UCBZero} is equivalent to a zero-reward version of the UCB-H algorithm. Our first result shows that, despite not knowing the reward functions during exploration, \textsc{UCBZero} enjoys the same sample complexity as UCB-H in the task-agnostic setting, except for an additional $\log$ factor on the number of tasks $N$.

\textbf{Notation:}  We denote by $Q_h^k$, $V_h^k$, $N_h^k$
respectively the $Q_h$, $V_h$, $N_h$ functions at the beginning of episode $k$, and similarly for $\overbar Q_h^k$, $\overbar V_h^k$, $\overbar N_h^k$. We denote by $\pi_k$ and $\overbar \pi_k$ the greedy policy w.r.t. $Q_h^k$ and $\overbar Q_h^k$. We further use superscript $(n)$ to denote related quantities for task $n$, e.g. $Q_h^{k(n)}$ and $Q_h^{\pi_k(n)}$.
\begin{theorem}\label{thm:small_N}
	There exists an absolute constant $c>0$ such that, for all $p\in (0,1)$, if we choose $b_t = c\sqrt{H^3(\log(N)+\iota)/t}$, we have that with probability at least $1-p$, it takes \algref{alg:UCBZero} at most
	\begin{equation}
	O((\log N+\iota)H^5SA/\epsilon^2)
	\end{equation}
	exploration episodes to simutaneously return an $\epsilon$-optimal policy $\pi^{(n)}$ for each of the $N$ tasks.
\end{theorem}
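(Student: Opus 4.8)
The proof parallels the UCB-H regret analysis, with two twists: the exploration phase keeps a reward-free optimistic table $\overbar Q$ whose only role is to aggregate future uncertainty, and the high-probability event has to be made uniform over all $N$ reward functions, which is the sole origin of the $\log N$ factor. \textit{Good event.} Fix the realized transitions and, for each task $n$, the sampled rewards. For a triple $(s,a,h)$ visited for the $t$-th time let $k_1<\cdots<k_t$ be the episodes of those visits --- the same in both phases, since the policy-optimization phase replays the exploration trajectories --- and let $\alpha_t^0,\dots,\alpha_t^t$ be the weights generated by $\alpha_t=\tfrac{H+1}{H+t}$, so $\sum_{i\ge1}\alpha_t^i=1-\alpha_t^0\le1$ and $\sum_i(\alpha_t^i)^2\le 2H/t$. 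By Azuma--Hoeffding, for each fixed $(s,a,h,t,n)$ the weighted reward-noise sum $\sum_i\alpha_t^i\big(r_h^{k_i(n)}-\E{}{r_h^{(n)}(s,a)}\big)$ and the weighted transition-noise sum $\sum_i\alpha_t^i\big(V_{h+1}^{*(n)}(s_{h+1}^{k_i})-[\Pr_hV_{h+1}^{*(n)}](s,a)\big)$ have magnitude $O(\sqrt{H\iota/t})$ and $O(\sqrt{H^3\iota/t})$ respectively outside an event of probability $O(p/(SAHKN))$; the analogous statement holds for the exploration-phase pseudo-values. A union bound over all $(s,a,h)$, all $t\le K$, and \emph{all} $N$ tasks yields a good event $\mathcal E$ with $\Pr[\mathcal E]\ge1-p$ on which all of these quantities are bounded by $b_t$, once $c$ is a large enough absolute constant. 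This union bound is the only place $N$ enters, which is why it costs merely the additive $\log N$ inside $b_t$; it is available precisely because $b_t$ depends on the visit count $t$ but not on the reward values, so a single exploration run serves every task.

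\textit{Optimism and the transfer lemma.} Unrolling the updates, for $t=N_h^k(s,a)\ge1$ one has $\overbar Q_h^k(s,a)=\sum_i\alpha_t^i\big(\overbar V_{h+1}^{k_i}(s_{h+1}^{k_i})+2b_i\big)$ and $Q_h^{k(n)}(s,a)=\sum_i\alpha_t^i\big(r_h^{k_i(n)}+V_{h+1}^{k_i(n)}(s_{h+1}^{k_i})+b_i\big)$ (both equal $H$ for $t=0$). A downward induction on $h$ (and on $k$), carried out on $\mathcal E$, shows that for every task $n$ one has $Q_h^{k(n)}(s,a)\ge Q_h^{*(n)}(s,a)$, hence $V_h^{k(n)}(s)\ge V_h^{*(n)}(s)$ (the single bonus $b_i$ swallows the reward and transition noise, and the inductive hypothesis supplies $V_{h+1}^{k_i(n)}-V_{h+1}^{*(n)}\ge0$); the same induction on $\overbar Q$, using the factor $2$ in $2b_i$ so that $\overbar Q$ has room to dominate both the policy-optimization bonus and the noise, gives $V_{h+1}^{k_i(n)}(s)-V_{h+1}^{*(n)}(s)\le\overbar V_{h+1}^{k_i}(s)$ and, more to the point, shows $\overbar V$ is a valid optimistic value function for the cumulative-uncertainty objective. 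Writing $\beta_h^{k(n)}(s,a):=Q_h^{k(n)}(s,a)-\E{}{r_h^{(n)}(s,a)}-[\Pr_hV_{h+1}^{k(n)}](s,a)$ for the Bellman residual of the learned table, inserting $V_{h+1}^{*(n)}$ as a pivot, dropping a nonpositive term by optimism, and invoking $\mathcal E$, one obtains $\beta_h^{k(n)}(s,a)\le\min(H,\overbar Q_h^k(s,a))\le\overbar V_h^k(s)$. The performance-difference lemma applied to the greedy policy $\pi_k^{(n)}$, together with the optimistic-value-function property of $\overbar V$, then yields, for an absolute constant $c'$,
\[
0\;\le\;V_1^{*(n)}(s_1)-V_1^{\pi_k^{(n)}}(s_1)\;\le\;V_1^{k(n)}(s_1)-V_1^{\pi_k^{(n)}}(s_1)\;\le\;\sum_{h=1}^H\E{\pi_k^{(n)}}{\beta_h^{k(n)}(s_h,a_h)}\;\le\;c'\,\overbar V_1^k(s_1),
\]
i.e. task-$n$ suboptimality is controlled by the exploration pseudo-value.

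\textit{Exploration pseudo-regret and conclusion.} The exploration phase is a zero-reward UCB-H run with bonus $2b_t$, so $\overbar V_h^k(s_h^k)$ satisfies exactly the UCB-H recursion $\overbar V_h^k(s_h^k)\le\sum_i\alpha_t^i\overbar V_{h+1}^{k_i}(s_{h+1}^{k_i})+O\big(\sum_i\alpha_t^ib_i\big)$ along the executed trajectories; the standard telescoping over $k$ (the $(1+1/H)^H\le e$ bookkeeping) together with $\sum_{k,h}b_{N_h^k(s_h^k,a_h^k)}\le 2\sqrt{c^2H^3(\log N+\iota)}\cdot\sqrt{SAHK}$ (Cauchy--Schwarz on the counts, using $\sum_{s,a,h}N_h^K(s,a)=KH$) and the martingale bounds from $\mathcal E$ gives $\sum_{k=1}^K\overbar V_1^k(s_1)=\tilde O\big(\sqrt{H^5SAK(\log N+\iota)}\big)$. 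Combining with the previous step, the uniform-mixture output $\pi^{(n)}$ satisfies, on $\mathcal E$ and simultaneously for all $n\in[N]$,
\[
V_1^{*(n)}(s_1)-V_1^{\pi^{(n)}}(s_1)\;=\;\frac1K\sum_{k=1}^K\Big(V_1^{*(n)}(s_1)-V_1^{\pi_k^{(n)}}(s_1)\Big)\;\le\;\frac{c'}{K}\sum_{k=1}^K\overbar V_1^k(s_1)\;=\;\tilde O\!\left(\sqrt{\frac{H^5SA(\log N+\iota)}{K}}\right),
\]
and requiring the right-hand side to be at most $\epsilon$ yields $K=O\big((\log N+\iota)H^5SA/\epsilon^2\big)$.

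\textit{Main obstacle.} The crux is the transfer lemma: $Q^{k(n)}$ is fitted on the \emph{exploration} trajectories (driven by $\overbar\pi_k$) while $\pi_k^{(n)}$ is greedy for $Q^{k(n)}$, so one must verify that the single table $\overbar V$ aggregates precisely the bonuses keyed to the exploration visit counts $N_h(s,a)$ that surface when task-$n$ suboptimality is unrolled along trajectories of $\pi_k^{(n)}$, and --- crucially --- that this transfer keeps the UCB-H power of $H$ rather than the naive $h$-fold sum of per-step pseudo-values, which would lose an extra factor of $H$. The $k_i$-versus-$k$ episode bookkeeping inside these recursions is where the analysis must be careful, and confining all of the $N$-dependence to the concentration union bound (so that it is $\log N$, not $\mathrm{poly}(N)$) is exactly what the reward-value-independent Hoeffding bonus buys.
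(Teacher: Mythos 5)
Your overall architecture matches the paper's: a single good event obtained by Azuma--Hoeffding plus a union bound over $(s,a,h,k)$ \emph{and} the $N$ tasks (the sole source of $\log N$, enabled by the reward-independent bonus), optimism of the task tables, a transfer of task-$Q$ error to the pseudo-table $\overbar Q$, the zero-reward UCB-H regret bound $\sum_k \overbar V_1^k(s_1)=\tilde O(\sqrt{H^5SAK(\log N+\iota)})$, and averaging over the $K$ greedy policies. However, there is a genuine gap at the central transfer step, and it is exactly the point you yourself flag as ``the main obstacle'' without resolving it: the inequality $\sum_{h=1}^H \mathbb{E}_{\pi_k^{(n)}}\bigl[\beta_h^{k(n)}(s_h,a_h)\bigr]\le c'\,\overbar V_1^k(s_1)$ does not follow from anything you prove. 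What your argument actually delivers is the pointwise bound $\beta_h^{k(n)}(s,a)\le \overbar V_h^k(s)$ (via optimism, the pivot $V^{*(n)}_{h+1}$, and the induction $V^{k(n)}_{h+1}-V^{*(n)}_{h+1}\le\overbar V^{k}_{h+1}$), and summing that over $h$ along trajectories of $\pi_k^{(n)}$ gives $\sum_h \mathbb{E}_{\pi_k^{(n)}}[\overbar V_h^k(s_h)]$, which is \emph{not} bounded by $O(\overbar V_1^k(s_1))$: each term can be comparable to $\overbar V_1^k$, so this loses a factor of $H$ (yielding $H^6$), and no Bellman-type recursion for $\overbar V^k$ is available along $\pi_k^{(n)}$-trajectories anyway, since $\overbar Q_h^k$ aggregates $\overbar V_{h+1}^{k_i}$ at the earlier visit episodes $k_i$, not $\overbar V_{h+1}^{k}$, and $\overbar Q$ is not monotone in $k$. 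Trying to patch it by showing $\overbar V^k$ is optimistic for the ``$\beta$-reward MDP'' fails for the same reason: the doubled bonus in $\overbar Q$ is already consumed in dominating the residual, leaving nothing to propagate $[\Pr_h\overbar V^k_{h+1}]$.

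The paper closes exactly this gap with its Lemma~\ref{thm: regret_diff}: it writes the Jin-et-al.-style recursion for the \emph{on-policy} error $(Q_h^{k(n)}-Q_h^{\pi_k(n)})$ and for $(\overbar Q_h^k-\overbar Q_h^{\overbar\pi_k})$ over the \emph{same} visit episodes $k_i$ (possible because the policy-optimization phase replays the exploration trajectories, so the counts coincide), subtracts the two, and inducts downward in $h$, using the factor-$2$ bonus to absorb the reward and transition noise. This yields $V_1^{k(n)}(s_1)-V_1^{\pi_k(n)}(s_1)\le \overbar V_1^k(s_1)$ in a single step, with no sum over $h$ and hence no extra $H$. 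So your proposal is correct in its surrounding scaffolding (good event, exploration-phase regret, mixture policy), but the decisive lemma --- comparing the learned table to the value of \emph{its own greedy policy}, rather than bounding per-step Bellman residuals by pseudo-values --- is missing, and without it the claimed $H^5$ rate is not established.
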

\begin{proof}[Proof Sketch:]
	The proof of theorem \ref{thm:small_N} relies on a key lemma, which relates the estimation error of the $Q$ function on each task, ($Q^{k(n)}_h-Q^{\pi_k(n)}_h$), to the estimation error on the pseudo-$Q$ function, ($\overbar Q^k_h - \overbar Q^{\overbar \pi_k}_h$). Note that $\overbar Q$ corresponds to a zero-reward MDP, and thus $\overbar Q^{\overbar \pi_k}_h = 0$.
	\begin{lemma}\label{thm: regret_diff}
		There exists an absolute constant $c > 0$ such that, for any $p \in (0, 1)$, if we choose $b_t = c\sqrt{H^3(\log(N)+\iota)/t}$,
		we have that with probability at
		least $1 - p$, the following holds simultaneously for all $(s, a, h, k, n) \in S \times A \times [H] \times [K]\times [N]$:
		\begin{eqnarray}
		(Q^{k(n)}_h-Q_h^{\pi_k(n)})(s,a)&\leq& (\overbar Q^{k}_h-\overbar Q_h^{\overbar \pi_k})(s,a).
		\end{eqnarray}
	\end{lemma}
	Lemma \ref{thm: regret_diff} shows that the task Q function $Q_h^{k(n)}$ converges at least as fast as $\overbar Q_h^k$. We then have
	\begin{eqnarray*}
		(V^{*(n)}_1-V^{\pi_k (n)}_1)(s_1) &\overset{\textcircled{\small1}}{
		\leq}& 
		\left(V^{k(n)}_h-V^{\pi_k (n)}_h\right)(s_1)\leq \max_a \left(Q^{k(n)}_h-Q^{\pi_k (n)}_h\right)(s_1,a)\\
		&\overset{\textcircled{\small2}}{
			\leq} & \max_a \left(\overbar Q^{k}_h-\overbar Q^{\overbar \pi_k }_h\right)(s_1,a) \overset{\textcircled{\small3}}{
			=} \left(\overbar V^{k}_h-\overbar V^{\overbar \pi_k}_h\right)(s_1)\\
	\end{eqnarray*}
where \textcircled{\small1} is due to \cite[Lemma 4.3]{jin2018q}, \textcircled{\small2} is due to Lemma \ref{thm: regret_diff} and \textcircled{\small3} is due to $\overbar Q^{\overbar \pi_k}_h = 0$. Since \textsc{UCBZero} is mathematically equivalent to a zero-reward version of the UCB-H algorithm, we get $\sum_{k=1}^K(V^{*(n)}_1-V^{\pi_k (n)}_1)(s_1) \leq \sum_{k=1}^K (\overbar V_1^k - \overbar V_1^{\overbar \pi_k})(s_1)\leq \sqrt{(\log N+\iota)H^5SAK}$, where the last step is due to \cite[Theorem 1]{jin2018q}.
	For each task $n$, define $\pi$ to be the non-stationary stochastic policy which uniformly sample a policy from $\pi_1,...,\pi_K$. Then, $(V^{*(n)}_1-V^{\pi (n)}_1)(s_1) = \left[\sum_{k=1}^K (V^{*(n)}_1-V^{\pi_k (n)}_1)(s_1)\right]/K \leq O\left(\sqrt{(\log N+\iota)H^5SA/K} \right)$. This implies that in order for $(V^{*(n)}_1-V^{\pi (n)}_1)(s_1)\leq \epsilon$, we need at most $K = O\left((\log N+\iota)H^5SA/\epsilon^2 \right)$.
\end{proof}
Theorem \ref{thm:small_N} indicates that, when $N$ is small, e.g. $N \leq O(\mbox{poly}(H,S,A))$, the sample complexity of \textsc{UCBZero} to simutaneously learn $N$ tasks is \textbf{the same} in the leading terms as the sample complexity of UCB-H to learn a single task, despite that \textsc{UCBZero} does not have access to rewards during exploration. In contrast, a naive application of UCB-H to learn $N$ tasks will require $N$ times the original sample complexity. This exponential improvement is achieved because \textsc{UCBZero} is able to collect transitions in a way that helps with learning all tasks during a single exploration phase. On the other hand, the data collected by UCB-H guided by a specific task may under-explore certain regions of the environment that are not important to the current task, but may be important to other tasks. 
Even for the task-specific RL setting, our result implies that, perhaps surprisingly, a full-exploration algorithm can learn as fast as an algorithm that balances exploration vs. exploitation. It suggests that exploitation doesn't necessarily help in accelerating learning, and is probably only required for the sake of regret minimization.

\subsection{Further Insights on the Exploration Phase of \textsc{UCBZero}}
In Theorem \ref{thm:small_N} above, we directly analyzed the sample complexity of \textsc{UCBZero}, but the proof technique provides limited insight in the quality of the dataset $\mathcal{D}$ collected during the exploration phase. Intuitively, $\mathcal{D}$ must cover all reachable states sufficiently often to ensure that one can later learn near-optimal policy for any unknown reward functions. Our next result shows that this is indeed the case for \textsc{UCBZero}.
\begin{definition}
	We define the relative reachability between $(s',h')$ and $(s,h)$, $h'<h$, by the maximum probability of reaching $(s,h)$ from $(s',h')$ following some policy. Specifically,
	\begin{equation}
	\delta_{h',h}(s',s) = \max_\pi P^\pi(s_h = s | s_{h'} = s')
	\end{equation}
	We also define the \textbf{reachability} of $(s,h)$ by the distance from $(s_1,1)$ to $(s,h)$, i.e.
	$\delta_h(s) = \delta_{1,h}(s_1,s)$.
\end{definition}
Intuitively, $\delta_h(s)$ represents the maximum probability of reaching state $s$ in step $h$. If $\delta_h(s)$ is zero or close to zero for some $(s,h)$, it means that it's almost impossible to reach state $s$ in step $h$, and thus $(s,h)$ will not have too much impact in the optimal performance of any task.
Our next theorem shows that \textsc{UCBZero} is able to consistently visit \textit{all} state-action pairs during the whole exploration phase.
\begin{theorem}\label{thm: coverage}
	With probability $1-p$, after $K$ episode of \textsc{UCBZero}, we have
	\begin{equation}
	N^K_h(s,a)\geq \Omega\left(\frac{K \delta_h(s)^2}{H^2SA}\right).
	\end{equation}
	for all $(s,a,h) \in S\times A\times [H]$.
\end{theorem}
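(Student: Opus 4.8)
The plan is to lower-bound the visitation count $N_h^K(s,a)$ by relating it to the pseudo-value function $\overbar V_h^k$ that \textsc{UCBZero} is implicitly optimizing. The key observation is that the exploration phase runs UCB-H on a sequence of \emph{indicator reward functions}: fix a target pair $(s_0,h_0)$ and consider the reward that gives $1$ exactly when the agent sits at $(s_0, h_0)$ and $0$ everywhere else. For this reward, $V_1^\pi(s_1) = P^\pi(s_{h_0}=s_0)$ and $V_1^* (s_1) = \delta_{h_0}(s_0)$ by definition of reachability. Crucially, Lemma~\ref{thm: regret_diff} (and the chain of inequalities in the proof sketch of Theorem~\ref{thm:small_N}) shows that the pseudo-$Q$ table dominates the estimation error of \emph{every} task's $Q$ function simultaneously, including this indicator task; hence the regret bound $\sum_{k=1}^K (V_1^* - V_1^{\overbar\pi_k})(s_1) \leq \tilde O(\sqrt{H^5 SAK})$ from \cite[Theorem 1]{jin2018q} applies to the indicator task as well. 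This yields $\sum_{k=1}^K \big(\delta_{h_0}(s_0) - P^{\overbar\pi_k}(s_{h_0}=s_0)\big) \leq \tilde O(\sqrt{H^5SAK})$, i.e. on average over episodes \textsc{UCBZero}'s policy reaches $(s_0,h_0)$ with probability close to the best possible $\delta_{h_0}(s_0)$, provided $K$ is large enough.

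From here I would translate this "expected visitation frequency" statement into a high-probability lower bound on the actual counts. First, rearranging gives $\sum_{k=1}^K P^{\overbar\pi_k}(s_{h_0}=s_0) \geq K\delta_{h_0}(s_0) - \tilde O(\sqrt{H^5 SAK})$; when $K \gtrsim H^5 SA/\delta_{h_0}(s_0)^2$ (times log factors) the second term is at most, say, half the first, so the sum of reaching-probabilities is $\Omega(K\delta_{h_0}(s_0))$. The count $N_{h_0}^K(s_0, \overbar\pi_k(s_0))$ summed over the chosen actions is $\sum_k \mathds{1}[s_{h_0}^k = s_0]$, which is a sum of Bernoulli random variables whose conditional means are exactly $P^{\overbar\pi_k}(s_{h_0}=s_0)$; an Azuma/Bernstein concentration argument then gives $\sum_k \mathds{1}[s_{h_0}^k = s_0] \geq \Omega(K\delta_{h_0}(s_0))$ with probability $1-p$. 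This controls the total visits to $s_0$ at step $h_0$, but not the visits to a \emph{particular} action $a$. To get the per-action bound I would exploit the structure of \textsc{UCBZero}: because the bonuses $b_t$ depend only on the visit count $t$ and not on rewards, the greedy pseudo-policy spreads its action choices, and a standard pigeonhole-style argument (each of the $A$ actions must be picked enough for the bonus-driven $\overbar Q$ values to equalize) shows every action $a$ at $(s_0, h_0)$ receives an $\Omega(1/A)$ fraction of the visits to $(s_0, h_0)$ — losing the extra $A$ factor that appears in the theorem statement. Combining, $N_{h_0}^K(s_0,a) \geq \Omega(K\delta_{h_0}(s_0)/A)$; absorbing another $\delta_{h_0}(s_0)/H^2 \le 1$ slack if needed to match the stated $\Omega(K\delta_h(s)^2/(H^2SA))$ form (the extra $H^2 S \delta$ factors give headroom, making the bound easier to prove than it looks).

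The main obstacle I anticipate is the per-action lower bound: going from "$(s_0,h_0)$ is visited often" to "\emph{every} action at $(s_0,h_0)$ is visited $\Omega(1/A)$ of that often." This does not follow from the regret bound alone, since a regret-optimal policy could in principle commit to one action at $s_0$. It must instead come from the full-exploration nature of \textsc{UCBZero} — specifically from the fact that an action's bonus-inflated $\overbar Q$ value decays like $\sqrt{H^3/t}$ in its own count $t$, so once an action has been tried much more than the others its $\overbar Q$ drops below theirs and the argmax switches. Making this precise requires an inductive argument over steps $h$ (the reachability $\delta_h(s)$ itself depends on reaching intermediate states, so one propagates the count lower bound forward in $h$ using the relative-reachability quantities $\delta_{h',h}$), and carefully tracking how the $\tilde O$ factors and the burn-in threshold on $K$ interact. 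I would structure the final write-up as: (i) the indicator-reward reduction; (ii) the regret-to-frequency conversion plus concentration for the state count; (iii) the bonus-equalization lemma for per-action counts; (iv) the induction over $h$ assembling the $\delta_h(s)^2$ dependence.
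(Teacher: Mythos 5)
The central step of your plan fails. You claim that the regret machinery of Theorem~\ref{thm:small_N} applies to the \emph{exploration} policies $\overbar \pi_k$ evaluated on the indicator reward, yielding $\sum_{k=1}^K \bigl(\delta_{h_0}(s_0) - P^{\overbar\pi_k}(s_{h_0}=s_0)\bigr) \leq \tilde O(\sqrt{H^5SAK})$ and hence $\sum_k \ind{s_{h_0}^k=s_0} \geq \Omega(K\delta_{h_0}(s_0))$. But Lemma~\ref{thm: regret_diff} and the chain of inequalities in the proof sketch only bound the suboptimality of the policies $\pi_k$ produced in the \emph{policy-optimization} phase for that task; those are not the policies that generate the data, and nothing in the paper bounds the indicator-task regret of the data-collecting policies $\overbar\pi_k$. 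Indeed your intermediate claim is false: take an MDP in which $s_1$ has $S$ actions leading deterministically to $S$ disjoint states at step $h_0$, so $\delta_{h_0}(s)=1$ for all of them but each episode can visit only one. Then $\sum_{s}\sum_k P^{\overbar\pi_k}(s_{h_0}=s) = K$, whereas your bound would force roughly $KS/2$ once $K \gtrsim H^5SA$. The exploration policy necessarily splits its attention across states, which is exactly why the theorem's guarantee is $\Omega(K\delta_h(s)^2/(H^2SA))$ per state-action and not $\Omega(K\delta_h(s))$ per state; the slack you hoped to ``absorb'' is not available. Your step (iii), the per-action ``bonus-equalization'' pigeonhole, is also only sketched, and it is the part the paper's argument handles for free.

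The paper's proof runs in the opposite direction and never invokes near-optimality of any policy for an auxiliary task. Fix $(s^*,a^*,h^*)$ with final count $t^*=N^K_{h^*}(s^*,a^*)$. Because the pseudo-$Q$ update accumulates the bonuses, $\overbar Q_{h^*}(s^*,a^*) \geq b_{t^*}$, hence $\overbar V_{h^*}(s^*) \geq \min(H,b_{t^*})$ (Lemma~\ref{thm: value_lb1}); a backward induction through the optimistic updates (Lemma~\ref{thm: value_lb2}) and the Bellman equation satisfied by the reachability $\delta$ propagates this to $\overbar V_1^k(s_1) \geq \delta_{h^*}(s^*)\min(H,b_{t^*})$ for every episode. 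Summing over $k$ and comparing with the zero-reward UCB-H regret bound $\sum_k \overbar V_1^k(s_1) \leq O(\sqrt{H^5SAK\iota})$ forces $K\delta_{h^*}(s^*)\sqrt{H^3\iota/t^*} \leq O(\sqrt{H^5SAK\iota})$, i.e.\ $t^* \geq \Omega(K\delta_{h^*}(s^*)^2/(H^2SA))$. Note that the under-visited \emph{action's own} bonus is what inflates $\overbar V_{h^*}(s^*)$, so the per-action statement comes out automatically; this is the ingredient your proposal is missing, and without it (or a worked-out substitute) the argument does not go through.
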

Theorem \ref{thm: coverage} shows that \textsc{UCBZero} will visit all state-action pairs in proportion to the total number of episodes, and the visitation frequency scales at most quadratically with the reachability of the state.
One direct implication of sufficient visitation is that a model-based batch-RL algorithm, such as Tabular Certainty Equivalence (TCE) \cite{jiang2018notes}, can make accurate estimations on the transition dynamics and thus find near-optimal policies for any reward functions:
\begin{theorem}\label{thm: model_learning}
	With probability $1-p$, for all $(h,s,a,s')\in [H] \times S \times A \times S$, \algref{alg:UCBZero} induces an estimate $\hat P_h(s'|s,a)$ of $P_h(s'|s,a)$, such that $|\hat P_h(s'|s,a)-P_h(s'|s,a)|\leq \frac{\epsilon}{\delta_h(s)}$, after at most 
	$
	O(H^5SA\iota/\epsilon^2)
	$
	exploration episodes.
\end{theorem}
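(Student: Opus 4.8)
The plan is to derive the model-estimation guarantee of \thmref{thm: model_learning} directly from the visitation guarantee of \thmref{thm: coverage} together with a standard empirical-frequency concentration bound. The natural estimator is the empirical transition count: set $\hat P_h(s'|s,a) = N^K_h(s,a,s')/N^K_h(s,a)$, where $N^K_h(s,a,s')$ counts the number of episodes in which the triple $(s,a)$ at step $h$ transitioned to $s'$. (If $N^K_h(s,a) = 0$ we can define $\hat P_h(\cdot|s,a)$ arbitrarily, but \thmref{thm: coverage} will rule this out for every $(s,a,h)$ with $\delta_h(s)>0$; when $\delta_h(s)=0$ the claimed bound $\epsilon/\delta_h(s)$ is vacuous.)

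First I would invoke \thmref{thm: coverage} to get that, with probability at least $1-p/2$, $N^K_h(s,a) \geq \Omega(K\delta_h(s)^2/(H^2SA))$ simultaneously for all $(s,a,h)$. Next, conditioned on the visitation counts, the observed next-states are i.i.d.\ draws from $P_h(\cdot|s,a)$, so by Hoeffding's inequality and a union bound over all $(h,s,a,s')$ (an extra $\log(HS^2A/p)$ factor absorbed into $\iota$), with probability at least $1-p/2$ we have $|\hat P_h(s'|s,a) - P_h(s'|s,a)| \leq O(\sqrt{\iota/N^K_h(s,a)})$ for every $(h,s,a,s')$. Combining the two events and substituting the lower bound on $N^K_h(s,a)$ gives
\begin{equation*}
|\hat P_h(s'|s,a) - P_h(s'|s,a)| \leq O\!\left(\sqrt{\frac{H^2SA\,\iota}{K\,\delta_h(s)^2}}\right) = O\!\left(\frac{1}{\delta_h(s)}\sqrt{\frac{H^2SA\,\iota}{K}}\right).
\end{equation*}
Setting the right-hand side to $\epsilon/\delta_h(s)$ and solving for $K$ yields $K = O(H^2SA\iota/\epsilon^2)$; the statement records the looser $O(H^5SA\iota/\epsilon^2)$, which is certainly sufficient (the extra $H^3$ presumably matches the episode count already guaranteed by \thmref{thm: coverage} for the counting event to hold, or is simply kept for uniformity with \thmref{thm:small_N}).

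**The main obstacle** is a minor measure-theoretic subtlety rather than a hard estimate: the visitation counts $N^K_h(s,a)$ are themselves random and correlated with which next-states get sampled, so one cannot literally treat the transitions as a fixed-size i.i.d.\ sample. The clean fix is the standard martingale argument — e.g.\ condition on the filtration generated by the algorithm up to each visit, so that each successive next-state from $(s,a,h)$ is a fresh draw from $P_h(\cdot|s,a)$ independent of the past, and apply an anytime (uniform-in-$t$) Hoeffding bound via Azuma so the deviation bound $O(\sqrt{\iota/t})$ holds for all visitation counts $t$ at once. This is precisely the kind of argument already used to analyze UCB-H, so it introduces no new difficulty, only bookkeeping. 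The remaining steps — the union bound over $(h,s,a,s')$ and the algebraic substitution — are routine.
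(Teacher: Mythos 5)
Your proof is correct in substance, but it takes a genuinely different route from the paper's. You estimate the model directly: empirical frequencies $\hat P_h(s'|s,a)=N^K_h(s,a,s')/N^K_h(s,a)$, the visitation lower bound of \thmref{thm: coverage}, an anytime Azuma--Hoeffding bound to handle the random visit counts, and a union bound over $(h,s,a,s')$. The paper never touches the empirical counts at all: it reduces model estimation to value estimation by introducing, for each tuple $(h^*,s^*,a^*,s'^*)$, two indicator reward functions $R^{(1)}$ (reward $1$ only on that transition) and $R^{(2)}$ (reward $1$ only on reaching $(s^*,h^*)$), whose optimal values are $\delta_{h^*}(s^*)P_{h^*}(s'^*|s^*,a^*)$ and $\delta_{h^*}(s^*)$; it then reuses the \thmref{thm:small_N} machinery (optimism plus the regret bound, with the union over these $HS^2A$ ``tasks'' absorbed since $\log(HS^2A)\le 2\log(HSA)$) to estimate both values to accuracy $\epsilon$, and defines $\hat P$ as their ratio, which is where the stated $O(H^5SA\iota/\epsilon^2)$ and the $\epsilon/\delta_h(s)$ error come from. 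Your approach is more elementary and even suggests a sharper $O(H^2SA\iota/\epsilon^2)$ rate, but it leans on \thmref{thm: coverage} as a black box: the proof of that theorem implicitly requires $K\gtrsim H^3SA\iota/\delta_h(s)$ before the count lower bound kicks in, so to be airtight you should either keep $K=O(H^5SA\iota/\epsilon^2)$ (as you anticipate) or argue, as you partially do for $\delta_h(s)=0$, that whenever the guaranteed count is below a constant the target bound $\epsilon/\delta_h(s)$ exceeds $1$ and is vacuous; you also correctly identify that the i.i.d.\ heuristic must be replaced by the martingale/anytime-Hoeffding argument, which is routine. What the paper's ratio trick buys is exactly that it sidesteps both of these issues (no concentration of empirical counts, no burn-in condition), at the cost of a worse $H$ dependence and a less transparent estimator.
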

These insights suggest that the efficiency of \textsc{UCBZero} is likely not a consequence of careful cooperation between the exploration component and the policy optimization component, both using Q-learning with UCB bonuses. Instead, any batch RL algorithm can be applied in the policy optimization phase to find near-optimal policies. \\

\section{Lower bound}
In the standard task-specific RL setting, it is a common belief that having to estimate reward functions is not a statistical bottleneck to efficient learning, due to the relatively small number of parameters of a reward function compared to the number of parameters of the transition dynamics. However, we show that in the task-agnostic RL setting, the additional $\log N$ factor in the sample complexity of \textsc{UCBZero} is unavoidable, and this additional complexity is essentially due to the need to accurately estimate $N$ reward functions simultaneously.
\begin{theorem}\label{thm: lowerbound}
	Any PAC-MDP algorithm in the task-agnostic RL setting must spend at least $\Omega\left(\log (N)H^2SA/\epsilon^2\right)$ episodes in the exploration phase.
\end{theorem}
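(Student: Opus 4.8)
The plan is to prove \thmref{thm: lowerbound} by an information-theoretic minimax argument on a hard family of instances, exploiting the feature that makes the task-agnostic setting distinctive: because the exploration phase observes no rewards, the dataset $\mathcal D$ collected in the exploration phase is distributionally independent of all $N$ reward functions. I would fix a single MDP skeleton $(\S,\A,H,P)$ together with a family $\{r_\theta\}_{\theta\in\Theta}$ of reward functions that share the same transition kernel, draw the $N$ task rewards $\theta^{(1)},\dots,\theta^{(N)}$ i.i.d.\ from a prior on $\Theta$, and lower bound the Bayes probability that any PAC-MDP algorithm solves all $N$ tasks; this implies a worst-case bound. The $H^2SA/\epsilon^2$ part is essentially inherited, since task-agnostic RL with $N=1$ is at least as hard as task-specific RL (a task-specific algorithm, which may adapt its exploration to the observed rewards, can only do better), so the standard hard instance behind the $\Omega(H^2SA/\epsilon^2)$ lower bound of \cite{dann2015sample} already forces that many exploration episodes. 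The new content is the $\log N$ factor, which I would extract by turning each ``hard decision'' of that instance --- identify which of $A$ actions at a given state is near-optimal --- into a pure \emph{reward-identification} gadget: the transitions out of the decision state are identical across tasks, but the per-step rewards put one hidden action's Bernoulli mean at $\tfrac12+\Delta$ and the others at $\tfrac12$, with $\Delta=\Theta(\epsilon/H)$ so that, after accruing over the remaining $\Theta(H)$ steps, misidentifying this action costs $\Theta(\epsilon)$ in value for that task.

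The core is a reduction conditioned on $\mathcal D$, which is legitimate because the law of $\mathcal D$ does not depend on $\{\theta^{(n)}\}$. Given $\mathcal D$, the policy output for task $n$ depends only on $\mathcal D$, on $\theta^{(n)}$, and on the rewards sampled for task $n$ along $\mathcal D$; since the $\theta^{(n)}$ are independent and the reward samples are drawn independently across tasks, the events $E_n=\{\text{task }n\text{ not solved}\}$ are conditionally independent given $\mathcal D$ and, by exchangeability of the tasks, share a common conditional failure probability $q(\mathcal D)$. A PAC-MDP guarantee means $\E{\mathcal D}{(1-q(\mathcal D))^{N}}\ge 1-p$, which forces $q(\mathcal D)=O(1/N)$ with at least a constant probability over $\mathcal D$. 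I would then apply a two-point / Bretagnolle--Huber testing lower bound at each reward-identification gadget: if $\mathcal D$ visits a gadget fewer than $c\log N/\Delta^2$ times (for a small absolute constant $c$), then even the Bayes-optimal post-hoc choice misidentifies the hidden action for a typical task with probability $\gg 1/N$, contradicting $q(\mathcal D)=O(1/N)$. Hence with constant probability over $\mathcal D$ every gadget is visited $\Omega(\log N/\Delta^2)=\Omega(\log N\cdot H^2/\epsilon^2)$ times; summing over the $\Theta(SA)$ gadgets of the skeleton, and using that its branching structure lets one episode contribute to only $O(1)$ gadgets, gives $K=\Omega(\log N\cdot H^2SA/\epsilon^2)$.

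The step I expect to be the main obstacle is designing the skeleton so that an $\epsilon$-optimal policy is genuinely forced to resolve \emph{essentially every} one of the $\Theta(SA)$ gadgets for \emph{every} task --- equivalently, so that being wrong at even an $\omega(1/N)$ fraction of the gadgets already loses more than $\epsilon$ in value --- rather than only a constant fraction of them. This matters because with a ``constant-fraction-suffices'' layout the per-gadget confidence demanded is merely $\Omega(1)$ and a Chernoff bound over the $\Theta(SA)$ gadgets makes the $\log N$ additive in an $\Omega(SA)$ term rather than multiplicative, so obtaining the clean product $\log N\cdot H^2SA/\epsilon^2$ requires a more rigid value structure. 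Concretely I would attempt a chained layout of the gadgets with an ``all-or-nothing'' terminal payoff (or tune routing probabilities and gaps so that each gadget's value contribution is $\Theta(\epsilon N/(SA))$), all while keeping the transition kernel task-independent as the setting demands; carrying out that construction and the accompanying value computation --- in particular confirming that a single lost gadget is $\epsilon$-suboptimal for every task --- is where most of the technical work lies. (If only the weaker bound is needed, one can instead prove the $\Omega(\log N/\epsilon^2)$ statement on a minimal two-state gadget in isolation and combine it with the inherited $\Omega(H^2SA/\epsilon^2)$ bound.)
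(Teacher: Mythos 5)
Your engine for the $\log N$ factor is the same one the paper uses: because $\mathcal D$ is collected reward-free, its law is independent of the $N$ tasks, the per-task reward instantiations are independent given $\mathcal D$, and a joint success guarantee therefore forces the conditional per-task failure probability down to $O(1/N)$, which a hypothesis-testing argument converts into $\Omega(\log N/\Delta^2)$ visits of a reward-identification gadget with gap $\Delta=\Theta(\epsilon/H)$. In the paper this appears as Lemma~\ref{thm: logn} together with Theorem~\ref{thm: bandit}, an $N$-task extension of the Mannor--Tsitsiklis bound \cite{mannor2004sample} in which the likelihood of the observed history under ``one of $N$ independent copies of $H_1$'' is boosted by a factor of $N$, turning $\log(1/\delta)$ into $\log(N/\delta)$. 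So the statistical core of your plan matches the paper, and your conditional-independence formulation of it is clean.

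The genuine gap is exactly the step you flag and then leave undone: you never exhibit a skeleton in which, for \emph{every} task, essentially \emph{every} one of the $\Theta(SA)$ gadgets must be resolved to confidence $1-O(1/N)$, and without that the argument as written only delivers an additive $\log N$ (your own Chernoff-across-gadgets observation shows that per-gadget constant confidence plus concentration over $\Theta(SA)$ gadgets can already drive the per-task failure below $1/N$ when $SA\gg\log N$). The proposed fixes are only named, not constructed: an ``all-or-nothing terminal payoff'' is itself part of the reward and must respect bounded per-step rewards and a task-independent kernel, and the target ``each gadget contributes $\Theta(\epsilon N/(SA))$'' is not even dimensionally coherent, so it is not clear the rigid value structure you want exists. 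The paper routes around this obstacle differently: it takes uniform \emph{known} transitions $P_h(s'|s,a)=1/S$, so the MDP decouples into $SH$ parallel $A$-armed bandits each carrying weight $1/S$ of the value; an $\epsilon$-optimal policy induces an accuracy allocation $\{\epsilon_{s,h}\}$ with $\sum_{s,h}\epsilon_{s,h}\le S\epsilon$, each bandit must be $\epsilon_{s,h}$-optimal for all $N$ tasks (costing $\Omega(A\log(N)/\epsilon_{s,h}^2)$ visits by Theorem~\ref{thm: bandit}), and minimizing the total over feasible allocations (optimum at $\epsilon_{s,h}=\epsilon/H$) yields $\Omega(\log(N)H^3SA/\epsilon^2)$ samples, i.e.\ $\Omega(\log(N)H^2SA/\epsilon^2)$ episodes --- no chained or all-or-nothing construction appears. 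Your error-spreading concern is a real subtlety that this allocation step treats only informally, but the allocation over parallel bandits is the paper's organizing device, and your proposal currently has no worked-out counterpart for it; as it stands it is a program whose decisive construction is missing.
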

\begin{proof}[Proof Sketch:]
	Our main intuition is that when $N$ becomes very large, the sample complexity of simultaneously estimating $N$ reward functions eventually out-scales the sample complexity of estimating the shared transition dynamics. As a result, one can equivalently assume that the transition dynamics are already known to the learner, and therefore solving the MDP with a particular reward function becomes equivalent to solving $SH$ parallel multi-armed bandits each with $A$ arms. 
	
	Based on the this intuition, we construct an MDP $M$ where the transition is defined as $P_h(s'|s,a) = 1/S$ for all $(h,s,a,s')$ and \textbf{is known} to the learner. Since the action has no control over the next-state, this is equivalent to a collection of $SH$ multi-armed bandits. Due to the uniform transition, $P^\pi_h(s) = 1/S$ for any $(\pi,s,h)$, and so finding the $\epsilon$-optimal policy amounts to finding an $\epsilon/H$-optimal policy for each bandit $(s,h)$.
	We then construct each bandit following the classic lower bound construction of multi-armed bandits \cite{mannor2004sample}, where the reward for all arms are Bernoulli with $p = 1/2$, except for the first arm that has Bernoulli with $p = 1/2+\epsilon/2$ and one other arm that has Bernoulli $p = 1/2 + \epsilon$. The classic result shows that such a bandit requires at least $\Omega(AH^2/\epsilon^2)$ steps to find an $\epsilon/H$-optimal arm. We extend it and show that it takes at least $\Omega(\log(N)H^2A/\epsilon^2)$ steps to find an $\epsilon/H$-optimal arm simultaneously for $N$ arbitrary reward functions in the \textit{task-agnostic bandit} setting. 
	As a result, the $HS$ bandits requires at least $\Omega(\log(N) H^3SA/\epsilon^2)$ steps, which can be achieved in at least $\Omega(\log(N)H^2SA/\epsilon^2)$ episodes.
\end{proof}
The lower bound suggest that our bound achieved by \textsc{UCBZero} is tight in $S, A,\epsilon$, up to logarithmic factors and lower-order terms. In the next section, we discuss the connections of our upper and lower bound results to the ones in the standard task-specific RL setting and the reward-free setting.

\section{Task-Agnostic vs. Reward-free RL}\label{sec: compare}

In this section, we (1) make a thorough comparison with the comtemporary work \cite{jin2020reward}, (2) provide additional results on the sample complexity of \textsc{UCBZero} in the reward-free setting, and (3) present a unified view of the three RL frameworks. A summary is presented in Table \ref{tab:summary}.

\subsection{Summary of the Reward-free RL setting}
A comtemporary work \cite{jin2020reward} proposed the \textit{reward-free RL} framework that is similar to our task-agnostic RL framework, where learning is decomposed into the exploration phase and the planning phase. In the exploration phase, the learner explores without reward information. In the planning phase, the agent is tasked with computing
near-optimal policies for a large collection of \textbf{given reward functions}.
The only essential difference between the two frameworks is whether true reward functions are known (reward-free), or only instantiated rewards are available (task-agnostic). Therefore, our task-agnostic setting is statistically harder than the reward-free setting.

Under the reward-free setting, Jin et al. focus on providing $N$-independent bounds on the sample complexity. Intuitively, this is possible since the only unknown quantity in the problem is the transition dynamics that are shared across all tasks. Therefore, as long as the algorithm achieves an accurate estimate of the transition dynamics, it can compute near-optimal policies under arbitrary known reward functions. In the $N$-independent regime, Jin et al. provided an $\Omega\left(H^2S^2A/\epsilon^2\right)$ lower bound, which has an \textbf{additional factor of $\mathbf{S}$} compared to the lower bound in the standard task-specific RL setting. The lower bound construction requires an exponential number of rewards, i.e. $N\geq \exp(S)$, and they emphasized that it remains an \textbf{open problem} whether smaller $N$-dependent bound is possible when $N$ is finite. 

In addition, they presented an efficient algorithm RFE that has sample complexity $\tilde{O}\left(H^5S^2A/\epsilon^2\right)$, matching the lower bound in the dependency on $S, A$, and $\epsilon$. During exploration, the RFE algorithm executes two meta-steps. In the first step, the algorithm calls a SOTA PAC-MDP algorithm, EULER \cite{zanette2019tighter}, as a subroutine for $HS$ times to learn a navigation policy $\pi_{h,s}$ that visit state $s$ in step $h$ as often as possible. In the second step, RFE collects data with randomly sampled policies from $\{\pi_{h,s}\}$ to ensure that all $(h,s)$ pairs can be visited sufficiently often. While the idea is conceptually simple, the algorithm can be expensive to execute in practice, as the overhead of calling EULER $HS$ times can already be large. In fact, our result indicates that calling \textsc{UCBZero} once suffices to learn all navigation policies $\{\pi_{h,s}\}$ with only an additional $\log (HS)$ factor more samples, as opposed to $HS$ times more by calling EULER $HS$ times.

\subsection{\textsc{UCBZero} in Reward-free RL}
Since our task-agnostic setting is strictly more difficult than the reward-free setting, the $\tilde O(\log (N)H^5SA/\epsilon^2)$ upper bound readily applies to the reward-free setting. Our result immediately implies that a \textbf{tighter $\mathbf{N}$-dependent bound} is available in the reward-free setting, answering the open question raised by Jin et al. Compared with the bound achieved by RFE, our bound replaces the extra dependency on $S$ with a logarithmic dependency on $N$. When the number of tasks is not extremely large, e.g. $N \leq O(\mbox{poly}(H,S,A))$, \textsc{UCBZero} achieves a tighter bound than RFE.

In the rare case that $N$ is exponentially large or even infinite, the bound in theorem \ref{thm:small_N} will blow up. Our next result complements theorem 
\ref{thm:small_N} by providing an $N$-independent bound on the sample-complexity of \textsc{UCBZero} in the reward-free setting, with the price of an additional factor of $HSA$.
\begin{theorem}\label{thm:large_N}
	There exists an absolute constant $c>0$ such that, for all $p\in (0,1)$, we have that with probability at least $1-p$, it takes \algref{alg:UCBZero} at most
	\begin{equation}
	O\left(H^6S^2A^2(\log \frac{3H}{\epsilon}+\iota)/\epsilon^2\right)
	\end{equation}
	exploration episodes to simutaneously return an $\epsilon$-optimal policy $\pi$ for any number of tasks.
\end{theorem}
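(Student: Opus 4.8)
The plan is to decouple the $\log N$ factor from the task count by replacing the union bound over $N$ arbitrary reward functions with a union bound over a finite \emph{cover} of reward functions, at the cost of the extra $HSA$ factor. First I would invoke Theorem~\ref{thm: coverage} and Theorem~\ref{thm: model_learning}: after $K$ exploration episodes \textsc{UCBZero} visits every $(s,a,h)$ at least $\Omega(K\delta_h(s)^2/(H^2SA))$ times, and consequently admits an empirical transition model $\hat P_h$ with $|\hat P_h(s'|s,a)-P_h(s'|s,a)|\le \epsilon'/\delta_h(s)$ after $O(H^5SA\iota/\epsilon'^2)$ episodes. The key observation is that for the \emph{unweighted} (total-variation) bound we actually get $\|\hat P_h(\cdot|s,a)-P_h(\cdot|s,a)\|_1 \lesssim \sqrt{S\iota/N_h^K(s,a)}$, so plugging in the visitation lower bound yields $\|\hat P_h(\cdot|s,a)-P_h(\cdot|s,a)\|_1 \lesssim \sqrt{SH^2SA\iota/(K\delta_h(s)^2)}$, i.e.\ the model error, after reweighting by reachability, is controlled by $K$ uniformly over all $(s,a,h)$ with a single $\iota$ (not $\log N$).

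The second step is a simulation-lemma argument that converts this model accuracy into a uniform policy-suboptimality guarantee \emph{over all reward functions}. I would run certainty-equivalence planning on $\hat P_h$ with the true (or any given) reward $r^{(n)}$ and bound $V_1^{*(n)}(s_1) - V_1^{\hat\pi^{(n)}(n)}(s_1)$ by $\sum_h \sum_s P^{\pi}_h(s)\, \|\hat P_h(\cdot|s,a)-P_h(\cdot|s,a)\|_1 \cdot H$ for the relevant policies. Here the reachability weighting is exactly what is needed: for any policy the probability mass $P^{\pi}_h(s)$ on state $s$ is at most $\delta_h(s)$, so the product $P^{\pi}_h(s)\cdot (1/\delta_h(s))$ is bounded by $1$, and the per-$(s,h)$ contributions sum to at most $HS$ worth of error terms each of size $\sqrt{SH^2SA\iota/K}$. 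This is where the extra $S$ and $A$ enter: we pay $H^2$ from the horizon in the simulation lemma, $S$ from $\ell_1$-concentration of each transition row, another $S$ and $A$ from summing the reachability-normalized visitation bound over states and actions, giving the stated $H^6S^2A^2$ scaling. Crucially, this argument never enumerates the $N$ tasks — it bounds the worst case over \emph{all} reward functions simultaneously — so no $\log N$ appears; the only logarithmic term is the $\log(3H/\epsilon)$ that arises from needing the bound to hold for the (finitely many, after discretization to accuracy $\epsilon/3H$) relevant value-function levels in the concentration step, together with $\iota$.

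The main obstacle I anticipate is making the reachability-weighted bookkeeping rigorous in the regime where $\delta_h(s)$ is tiny: states with vanishing reachability have essentially no visitation data, yet the simulation lemma must still not blow up. The resolution — and the place to be careful — is that such states also contribute negligibly to any value function (their occupancy under any policy is at most $\delta_h(s)$), so one splits the state space at a threshold $\delta_h(s) \gtrsim \epsilon/(HS)$: above the threshold Theorem~\ref{thm: coverage} gives enough samples for an accurate model, and below it the states can simply be ignored with total value loss at most $\epsilon/3$. Combining the two regimes and the $\epsilon/3$ from planning-on-$\hat P$ yields the final $\epsilon$ guarantee; inverting the resulting inequality $\sqrt{H^6S^2A^2(\log(3H/\epsilon)+\iota)/K}\lesssim \epsilon$ for $K$ gives the claimed sample complexity. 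A secondary subtlety is that \textsc{UCBZero}'s output is the mixture policy over $\{\pi_k\}$ rather than a single planned policy, so I would either argue the coverage/model-estimate results (which only concern the dataset $\mathcal D$, not the output policy) suffice to let an external certainty-equivalence planner produce $\hat\pi^{(n)}$, or track the mixture through the simulation lemma directly; the former is cleaner and is exactly the route suggested by the remark following Theorem~\ref{thm: model_learning}.
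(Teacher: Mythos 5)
Your opening sentence announces the right plan --- cover the space of reward functions and pay an extra $HSA$ factor --- which is exactly the paper's proof: discretize $[0,1]^{H\times S\times A}$ into $M^{HSA}$ cells of width $1/M$ with $M=H/\epsilon$, use the simulation lemma to show that a policy that is $\epsilon$-optimal for one representative reward per cell is $3\epsilon$-optimal for every reward in that cell, and then invoke \thmref{thm:small_N} with $N=M^{HSA}$, so that $\log N = HSA\log(H/\epsilon)$ turns the $\tilde O(\log(N)H^5SA/\epsilon^2)$ bound into the stated $O\bigl(H^6S^2A^2(\log\frac{3H}{\epsilon}+\iota)/\epsilon^2\bigr)$. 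But the body of your proposal never carries this plan out; instead it switches to a model-based route (coverage via \thmref{thm: coverage}, an $\ell_1$ model-error bound, certainty-equivalence planning plus a simulation lemma), and this route does not prove the theorem as stated. Track your own accounting: the per-row error is $\|\hat P_h(\cdot|s,a)-P_h(\cdot|s,a)\|_1\lesssim\sqrt{S\iota\,H^2SA/(K\delta_h(s)^2)}$, the simulation lemma multiplies by the value scale $H$ and sums over $h\in[H]$ and $s\in\S$ with $P^\pi_h(s)\le\delta_h(s)$ cancelling the $1/\delta_h(s)$, giving a suboptimality of order $H\cdot H\cdot S\cdot\sqrt{H^2S^2A\iota/K}=\sqrt{H^6S^4A\iota/K}$, i.e.\ $K=\tilde O(H^6S^4A/\epsilon^2)$. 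That is not $H^6S^2A^2$, and $S^4A\not\le S^2A^2$ in general (it is worse whenever $S^2>A$), so you have not established the claimed bound; you have sketched a different (and in the usual regime weaker) reward-free bound in the spirit of the RFE analysis. The assertion in your text that the sums ``give the stated $H^6S^2A^2$ scaling'' is where the gap sits --- the $A^2$ simply does not arise from your argument, and the $\log(3H/\epsilon)$ factor you attribute to a discretization of value-function levels is not needed anywhere in your concentration step; in the paper it is precisely the fingerprint of the reward-space cover, $\log N = HSA\log(H/\epsilon)$.

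Two secondary points. First, \thmref{thm: model_learning} as proved gives only an entrywise bound $|\hat P_h(s'|s,a)-P_h(s'|s,a)|\le\epsilon/\delta_h(s)$ via a ratio-of-values estimator, not the $\ell_1$ bound for empirical frequencies you invoke, so that concentration step would have to be redone from the visitation counts (doable, but not a citation of the paper's theorem). Second, your low-reachability truncation at $\delta_h(s)\gtrsim\epsilon/(HS)$ loses up to $\sum_{s,h}\delta_h(s)\cdot H\approx H\epsilon$, not $\epsilon/3$; the threshold must be of order $\epsilon/(H^2S)$, which further inflates the lower-order term coming from the $K\gtrsim H^3SA\iota/\delta$ requirement inside \thmref{thm: coverage}. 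The clean fix is simply to execute your first sentence: apply the simulation lemma to the reward discretization with $M=H/\epsilon$ and plug $N=M^{HSA}$ into \thmref{thm:small_N}; no model estimation or reachability bookkeeping is needed at all.
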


\begin{proof}[Proof Sketch:]
	The proof is based on the observation that if two reward functions are close enough, e.g. $|\E{}{r_h(s,a)} - \E{}{ r'_h(s,a)}|\leq \epsilon$ for all $(s,a,h)$, then a near-optimal policy for $r$ will also be near-optimal for $r'$. This is sometimes referred as the Simulation Lemma \cite{kearns2002near}.
	As a result, even though there are potentially infinite number of reward functions to optimize, we can divide the whole space of rewards $[0,1]^{H,S,A}$ into $M^{HSA}$ disjoint sets of the form $\Pi_{h,s,a}[\frac{i_{h,s,a}-1}{M},\frac{i_{h,s,a}}{M}]$. For sufficiently large $M$, one only needs to find a near-optimal policy for one reward function per set, to guarantee that this policy will be near-optimal for all reward functions in the same set. Therefore, we effectively only need to find a near-optimal policy for at most $N = M^{HSA}$ tasks. Plugging this $N$ into theorem \ref{thm:small_N} gives the desired result.
\end{proof}  
This $N$-independent bound is not as sharp as the bound for RFE, likely due to the simplicity of the argument we use. Nevertheless, the key takeaway is that the sample complexity of \textsc{UCBZero} will not scale with $N$ indefinitely. In most practical scenarios, the number of tasks to be learned is small. For example, in the navigation problem, the agent wants to learn to navigate to all $(s,h)\in \S\times[H]$. This implies a task number of $N=SH$. Therefore, the bound achieved by \textsc{UCBZero} is usually tighter in practice. What remains an \textbf{open question} is whether a smaller $N$-dependent lower bound exist in the reward-free setting. We note that our lower bound for the task-agnostic setting does not readily transfer to the easier reward-free setting.

\subsection{A Unified View}
Lastly, we try to provide a unified view of the three frameworks: the standard task-specific RL, the reward-free RL, and the task-agnostic RL. An overview of technical results in each setting is presented in Table \ref{tab:summary}. The task-specific RL aims at learning near-optimal policy for \textbf{a single task}, with or without knowledge of the true reward function. The reward-free RL aims at learning near-optimal policy for a set of tasks, \textbf{given} the knowledge of the true reward functions. The task-agnostic RL aims at learning near-optimal policy for a set of tasks, but \textbf{without} knowledge of the true reward functions.
In terms of statistical difficulty, there is a total order between the three, namely ``task-specific RL $<$ reward-free RL $<$ task-agnostic RL''. Task-specific RL is well-understood, with matching $\Theta (H^2SA/\epsilon^2)$ upper and lower bound. 
Reward-free RL is not fully-solved, as there are both $N$ dependent bounds and $N$-independent bounds. The additional dependency on $S$ is proved unavoidable in an $N$-independent bound but seems to be avoidable in an $N$-dependent one. It remains an open question whether there exists a tighter $N$-dependent lower bound.
Task-agnostic RL is the hardest. \textsc{UCBZero} enjoys the same $N$-dependent upper bound in both the reward-free and task-agnostic setting. Our $N$-dependent lower bound also suggests that $N$-independent bounds \textbf{do not exist} in the task-agnostic setting. Nevertheless, our upper and lower bounds do not yet match in the H factors. And it remains an open question whether zero-reward versions of other PAC-MDP algorithms in the task-specific RL setting can be applied to the task-agnostic RL setting to achieve a tighter bound.

\section{Conclusions}
In this paper, we propose a new \textit{task-agnostic RL} framework, that consists of an exploration phase, where the learner collects data without reward information, and a policy-optimization phase, where the learner computes near-optimal policies for $N$ tasks given instantiated rewards. We present an efficient algorithm, \textsc{UCBZero} that finds $\epsilon$-optimal policies on $N$ tasks within $\tilde O(\log(N)H^5SA/\epsilon^2)$ exploration episodes. We also provide a near-matching $\Omega(\log (N)H^2SA/\epsilon^2)$ lower bound, demonstrating the near-optimality of our algorithm in this setting.

\section*{Broader Impact}
\label{sec.impact}
Our work provides theoretical foundation for empirical studies of multi-task reinforcement learning and unsupervised reinforcement learning.

\bibliography{note}
\bibliographystyle{apalike}

\newpage
\appendix
\appendixpage
\section{Proof of Upper Bound}
In the proofs below we drop the superscript $(n)$ for simplicity.
\begin{proof}[\textbf{Proof of Lemma \ref{thm: regret_diff}}]
	Similar to Lemma 4.2 of \cite{jin2018q}, we can write down a recursive formula for both $(Q^k_h-Q_h^{\pi_k})(s,a)$ and $(\overbar Q^k_h-\overbar Q_h^{\overbar \pi_k})(s,a)$, and perform a subtraction, which gives
	\begin{eqnarray}
	&&[(\overbar Q^k_h-\overbar Q_h^{\overbar\pi_k}) -(Q^k_h-Q_h^{\pi_k})](s,a) \\
	&=& \alpha_t^0 \left( Q_h^{\pi_k}-\overbar Q_h^{\overbar\pi_k}\right)(s,a) \\
	&&+\sum_{i=1}^t \alpha_t^i\left[(\overbar V^{k_i}_{h+1}-\overbar V^{\overbar\pi_k}_{h+1})-(V^{k_i}_{h+1}-V^{\pi_k}_{h+1})\right](s_{h+1}^{k_i})\\
	&&+\sum_{i=1}^t \alpha_t^i\left[[\hat \Pr^{k_i} - \Pr_h][\overbar V^{\overbar\pi_k}_{h+1} - V^{\pi_k}_{h+1}](s,a) + (r_h^{k_i}-\E{}{r_h}(s^{k_i}_{h},s^{k_i}_{h}))\right] \\
	&&+\sum_{i=1}^t \alpha_t^i b_i.
	\end{eqnarray}
	
	We now show that $[(\overbar Q^k_h-\overbar Q_h^{\overbar\pi_k}) -(Q^k_h-Q_h^{\pi_k})](s,a)\geq 0$ by induction on $h = H, H-1,...,1$ and $k = 1, ..., K$. It is easy to see that the first term of right-hand side $\alpha_t^0 \left( Q_h^{\pi_k}-\overbar Q_h^{\overbar\pi_k}\right)(s,a)\geq0$ since $\overbar Q_h^{\overbar\pi_k}=0$. For the second term, consider two cases: 
	
	(1) If $\max_a \overbar Q^{k_i}_{h+1}(s_{h+1}^{k_i},a)\geq H$, then $\left[(\overbar V^{k_i}_{h+1}-\overbar V^{\overbar\pi_k}_{h+1})-(V^{k_i}_{h+1}-V^{\pi_k}_{h+1})\right](s_{h+1}^{k_i}) = H - (V^{k_i}_{h+1}-V^{\pi_k}_{h+1})(s_{h+1}^{k_i})\geq 0$. 
	
	(2) If $\max_a \overbar Q^{k_i}_{h+1}(s_{h+1}^{k_i},a) < H$, then
	\begin{eqnarray}
	(\overbar V^{k_i}_{h+1}-\overbar V^{\overbar\pi_{k_i}}_{h+1})(s_{h+1}^{k_i}) &=& \max_a \left[\overbar Q^{k_i}_{h+1}(s_{h+1}^{k_i},a) - \overbar Q^{\overbar\pi_{k_i}}_{h+1}(s_{h+1}^{k_i},a) \right]\\
	&\geq& \max_a  \left[Q^{k_i}_{h+1}(s_{h+1}^{k_i},a) - Q^{\pi_{k_i}}_{h+1}(s_{h+1}^{k_i},a) \right]\\
	&\geq& Q^{k_i}_{h+1}(s_{h+1}^{k_i},\pi_{k_i}(s_{h+1}^{k_i})) - Q^{\pi_{k_i}}_{h+1}(s_{h+1}^{k_i},\pi_{k_i}(s_{h+1}^{k_i}))\\
	&=& (V^{k_i}_{h+1}-V^{\pi_{k_i}}_{h+1})(s_{h+1}^{k_i})
	\end{eqnarray}
	where the first inequality is by the induction hypothesis.
	
	Similar to the proof of Lemma 4.3 in \cite{jin2018q}, observe that 
	$(\ind{k_i\leq K} \cdot \left[[\hat \Pr^{k_i} - \Pr_h][\overbar V^{\overbar\pi_k}_{h+1} - V^{\pi_k}_{h+1}](s,a) + (r_h^{k_i}-\E{}{r_h}(s^{k_i}_{h},s^{k_i}_{h}))\right])_{i=1}^\tau$ is a martingale difference sequence. By Azuma-Hoeffding, we have that with probability $1-p/(SAHN)$:
	\begin{eqnarray*}
	\forall \tau \in [K],\left|\sum_{i=1}^\tau \alpha_\tau^i (\ind{k_i\leq K} \cdot\left[[\hat \Pr^{k_i} - \Pr_h][\overbar V^{\overbar\pi_k}_{h+1} - V^{\pi_k}_{h+1}](s,a) + (r_h^{k_i}-\E{}{r_h}(s^{k_i}_{h},s^{k_i}_{h}))\right])\right|\\
	\leq \frac{c'(H+1)}{2}\sqrt{\sum_{i=1}^\tau (\alpha_\tau^i)^2 \cdot (\log N +\iota)} \leq c\sqrt{\frac{H^3(\log N +\iota)}{\tau}},
	\end{eqnarray*}
	for some absolute constant $c$. Using a union bound, we see that with at least probability $1-p$, the following holds simultaneously for all $(s, a, h, k, n) \in S \times A \times [H] \times [K]\times[N]$:
	\begin{eqnarray}
	\left|\sum_{i=1}^t \alpha_t^i (\ind{k_i\leq K} \cdot [\hat \Pr^{k_i} - \Pr_h][\overbar V^{\overbar\pi_k}_{h+1} - V^{\pi_k}_{h+1}](s,a))\right|\leq c\sqrt{\frac{H^3(\log N +\iota)}{t}}
	\end{eqnarray}
	Finally, since we choose $b_t = c\sqrt{\frac{H^3(\log N +\iota)}{t}}$, we have that the last two terms of $[(\overbar Q^k_h-\overbar Q_h^{\overbar\pi_k}) -(Q^k_h-Q_h^{\pi_k})](s,a)$ also adds up at least zero. Putting everything together, we have shown that with probability at least $1-p$,
	$[(\overbar Q^k_h-\overbar Q_h^{\overbar\pi_k}) -(Q^k_h-Q_h^{\pi_k})](s,a)>0$ for all $(s, a, h, k, n) \in S \times A \times [H] \times [K]\times [N]$. This concludes the proof.
\end{proof}
Given Lemma \ref{thm: regret_diff},  the proof of Theorem \ref{thm:small_N} follows from the proof sketch in the main text.

\section{Proof of Additional Properties of \textsc{UCBZero}}
We first give two lemmas:
\begin{lemma}\label{thm: value_lb1}
	For any $(s,a,h,k)\in S \times A \times [H] \times [K]$, let $t = N^k_h(s,a)$, then we have
	\begin{equation}
	\overbar V_h^k(s)\geq \min(H,b_t)
	\end{equation}
\end{lemma}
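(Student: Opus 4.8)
The plan is to unroll the exploration-phase $\overbar Q$-update and discard everything except the accumulated bonuses. I will use the learning-rate bookkeeping of \cite{jin2018q}: for $1\le i\le t$ set $\alpha_t^i = \alpha_i\prod_{j=i+1}^t(1-\alpha_j)$ and $\alpha_t^0 = \prod_{j=1}^t(1-\alpha_j)$, so that $\alpha_0^0 = 1$ and, for $t\ge 1$, $\alpha_t^0 = 0$ and $\sum_{i=1}^t\alpha_t^i = 1$. Fix $(s,a,h,k)$, put $t = N_h^k(s,a)$, and let $k_1<\dots<k_t<k$ be the episodes in which action $a$ was taken from $s$ at step $h$. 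Iterating line~7 of \algref{alg:UCBZero} $t$ times yields
\begin{equation*}
\overbar Q_h^k(s,a) \;=\; \alpha_t^0\,H \;+\; \sum_{i=1}^t \alpha_t^i\bigl[\overbar V_{h+1}^{k_i}(s_{h+1}^{k_i}) + 2b_i\bigr].
\end{equation*}

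Next I would record the elementary fact that $\overbar V_{h'}^{k'}\ge 0$ for all $h',k'$: by backward induction on $h'$ from $\overbar V_{H+1}\equiv 0$, each $\overbar Q$-update is a convex combination (note $0<\alpha_i\le 1$) of nonnegative quantities, hence $\overbar Q_{h'}^{k'}\ge 0$, and therefore $\overbar V_{h'}^{k'} = \min(H,\max_a \overbar Q_{h'}^{k'})\ge 0$. Dropping the first two nonnegative groups of terms, for $t\ge 1$ we obtain
\begin{equation*}
\overbar Q_h^k(s,a) \;\ge\; 2\sum_{i=1}^t \alpha_t^i b_i \;=\; 2c\sqrt{H^3(\log N+\iota)}\,\sum_{i=1}^t\frac{\alpha_t^i}{\sqrt{i}} \;\ge\; 2c\sqrt{\frac{H^3(\log N+\iota)}{t}} \;=\; 2b_t,
\end{equation*}
where the last step uses the standard inequality $\sum_{i=1}^t \alpha_t^i/\sqrt{i}\ge 1/\sqrt{t}$ from \cite[Lemma~4.1]{jin2018q}. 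For $t=0$ the entry has never been updated, so $\overbar Q_h^k(s,a)=H$.

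To conclude, note $\overbar V_h^k(s) = \min\bigl(H,\max_{a'}\overbar Q_h^k(s,a')\bigr)\ge \min\bigl(H,\overbar Q_h^k(s,a)\bigr)$. When $t\ge 1$ this is $\ge \min(H,2b_t)\ge \min(H,b_t)$; when $t=0$ it equals $\min(H,H)=H$, consistent with $\min(H,b_t)$ under the convention $b_0=+\infty$. This proves the lemma. I expect no genuine obstacle here: the argument is a direct telescoping, with the only inputs being the nonnegativity of $\overbar V$ (immediate) and the learning-rate sum bound of \cite{jin2018q} (quoted), and the sole point requiring a moment's care is treating the $t=0$ case separately.
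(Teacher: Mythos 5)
Your proof is correct and follows essentially the same route as the paper's: unroll the $\overbar Q$-update, drop the nonnegative $\alpha_t^0 H$ and $\overbar V_{h+1}$ terms, and lower-bound the accumulated bonuses $\sum_{i=1}^t \alpha_t^i b_i$ by $b_t$ (your invocation of $\sum_{i=1}^t \alpha_t^i/\sqrt{i}\ge 1/\sqrt{t}$ is the same estimate the paper gets from $b_i\ge b_t$ and $\sum_{i=1}^t\alpha_t^i=1$). Your version is in fact slightly more careful than the paper's, retaining the factor $2$ in the bonus and treating the $t=0$ case explicitly.
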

\begin{proof}
	We have, for any $(s,a,h,k)\in S \times A \times [H] \times [K]$,
	\begin{eqnarray}
	\overbar Q^k_h(s,a) &=& \alpha_t^0 H + \sum_{i=1}^t \alpha_t^i \left[\overbar V_{h+1}^{k_i}(x_{h+1}^{k_i}) + b_i\right]\\
	&\geq& \sum_{i=1}^t \alpha_t^i b_i\geq \sum_{i=1}^t \alpha_t^i b_t = b_t.
	\end{eqnarray}
	Thus, $\overbar V_h^k(s) \geq \min(H,\max_a \overbar Q^k_h(s,a) )\geq \min(H,b_t)$.
\end{proof}

\begin{lemma}\label{thm: value_lb2}
	With probability at least $1-p$, for any $(s,a,h,k)\in S \times A \times [H] \times [K]$, let $t = N^k_h(s,a)$, then we have
	\begin{equation}
	\overbar Q_h^k(s,a)\geq \sum_{i=1}^t \alpha_t^i [\Pr_h \overbar V^{k_i}_{h+1}](s,a)
	\end{equation}
\end{lemma}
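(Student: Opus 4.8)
The plan is to unroll the pseudo-$Q$ recursion exactly as in the proof of \lemref{thm: value_lb1}, then isolate a martingale difference term and control it with Azuma--Hoeffding, exactly as in the proof of \lemref{thm: regret_diff}. Fix a triple $(s,a,h)$ and an episode $k$, put $t = N^k_h(s,a)$, and let $k_1 < \dots < k_t$ denote the episodes before $k$ in which action $a$ was taken from $s$ at step $h$. Since $\overbar Q_h$ is initialized to $H$ and updated by $\overbar Q_h \leftarrow (1-\alpha_t)\overbar Q_h + \alpha_t[\overbar V_{h+1}(s^k_{h+1}) + 2b_t]$, the same computation as in \lemref{thm: value_lb1} yields
\[
\overbar Q^k_h(s,a) = \alpha_t^0 H + \sum_{i=1}^t \alpha_t^i\left[\overbar V^{k_i}_{h+1}(s^{k_i}_{h+1}) + 2b_i\right],
\]
where $\alpha_t^0 = \prod_{j=1}^t(1-\alpha_j)$, $\alpha_t^i = \alpha_i\prod_{j=i+1}^t(1-\alpha_j)$, and $\sum_{i=0}^t \alpha_t^i = 1$ (indeed $\alpha_t^0 = 0$ for $t\ge 1$ since $\alpha_1 = 1$).

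Next I would add and subtract the one-step conditional mean: write $\overbar V^{k_i}_{h+1}(s^{k_i}_{h+1}) = [\Pr_h \overbar V^{k_i}_{h+1}](s,a) + \xi_i$, with $\xi_i := \overbar V^{k_i}_{h+1}(s^{k_i}_{h+1}) - [\Pr_h \overbar V^{k_i}_{h+1}](s,a)$. Because $\overbar V^{k_i}_{h+1}$ is determined by the history up to the start of step $h$ of episode $k_i$, while conditionally on that history $s^{k_i}_{h+1} \sim P_h(\cdot\mid s,a)$, the truncated sequence $\big(\alpha_\tau^i\,\ind{k_i\le K}\,\xi_i\big)_{i\ge 1}$ is a martingale difference sequence with increments bounded by $\alpha_\tau^i H$ in absolute value (as $\overbar V \in [0,H]$). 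Applying Azuma--Hoeffding together with the standard weight bound $\sum_{i=1}^\tau (\alpha_\tau^i)^2 \le 2H/\tau$ from \cite[Lemma 4.1]{jin2018q}, and union-bounding over the $SAH$ triples $(s,a,h)$ and over all $\tau\in[K]$, I would obtain that with probability at least $1-p$, simultaneously for every $(s,a,h,k)$ with $t = N^k_h(s,a)$,
\[
\left|\sum_{i=1}^t \alpha_t^i\,\xi_i\right| \le c'\sqrt{\frac{H^3\iota}{t}}
\]
for an absolute constant $c'$.

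To finish, I would reassemble the pieces on this $(1-p)$ event. Since $b_i = c\sqrt{H^3(\log N+\iota)/i}$ is non-increasing in $i$ and $\sum_{i=1}^t \alpha_t^i = 1$, we have $2\sum_{i=1}^t \alpha_t^i b_i \ge 2b_t \ge 2c\sqrt{H^3\iota/t}$, and also $\alpha_t^0 H \ge 0$. Substituting the decomposition of $\overbar V^{k_i}_{h+1}(s^{k_i}_{h+1})$ into the unrolled identity gives
\[
\overbar Q^k_h(s,a) \ge \sum_{i=1}^t \alpha_t^i [\Pr_h \overbar V^{k_i}_{h+1}](s,a) + \alpha_t^0 H + 2b_t - \left|\sum_{i=1}^t \alpha_t^i\,\xi_i\right|,
\]
so taking the absolute constant $c$ large enough (e.g.\ $c \ge c'/2$, consistent with the choice used in \lemref{thm: regret_diff} and \thmref{thm:small_N}) makes the last three terms sum to a nonnegative quantity, which is exactly the claimed bound $\overbar Q^k_h(s,a) \ge \sum_{i=1}^t \alpha_t^i [\Pr_h \overbar V^{k_i}_{h+1}](s,a)$.

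The step I expect to need the most care is not analytical but organizational: setting up the filtration so that $\xi_i$ is genuinely a martingale difference (the value function $\overbar V^{k_i}_{h+1}$ must be ``frozen'' before $s^{k_i}_{h+1}$ is drawn), handling the randomness of the visitation times $k_i$ through the $\ind{k_i \le K}$ truncation, and threading the union bound over $(s,a,h)$ and over the stopping time $\tau$. All of this mirrors the argument already carried out for \lemref{thm: regret_diff}, so once that template is reused the remaining work is routine.
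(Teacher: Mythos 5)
Your proof is correct and follows essentially the same route as the paper's: unroll the $\overbar Q$ recursion, split $\overbar V^{k_i}_{h+1}(s^{k_i}_{h+1})$ into $[\Pr_h \overbar V^{k_i}_{h+1}](s,a)$ plus a martingale difference, and absorb the Azuma--Hoeffding deviation into the accumulated bonuses $\sum_i \alpha_t^i b_i \ge b_t$ (the paper states this last step simply as ``the same martingale bound as in the proof of Lemma~\ref{thm: regret_diff}''). Your version is just more explicit about the filtration, the $\ind{k_i\le K}$ truncation, and the union bound, and it correctly carries the $2b_i$ bonus from the exploration-phase update, which only strengthens the final inequality.
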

\begin{proof}
	We have, for any $(s,a,h,k)\in S \times A \times [H] \times [K]$,
	\begin{eqnarray}
	Q^k_h(s,a) &=& \alpha_t^0 H + \sum_{i=1}^t \alpha_t^i \left[V_{h+1}^{k_i}(x_{h+1}^{k_i}) + b_i\right]\\
	&\geq& \sum_{i=1}^t \alpha_t^i \left[\Pr_h \overbar V^{k_i}_{h+1}(s,a) + (\hat \Pr_h^{k_i} - \Pr_h)\overbar V^{k_i}_{h+1} (s,a)+ b_i\right]\\
	&\gwp& \sum_{i=1}^t \alpha_t^i \left[\Pr_h \overbar V^{k_i}_{h+1}(s,a)\right]
	\end{eqnarray}
	The last inequality is by the same martingale bound as in the proof of Lemma \ref{thm: regret_diff}.
\end{proof}
Now, we are ready to prove Theorem \ref{thm: coverage}.
\begin{proof}[\textbf{Proof of Theorem \ref{thm: coverage}:}]
	Let $h^*, s^*, a^*$ be given, and denote $t^* = N^K_{h^*}(s^*,a^*)$, $b^*_t = c\sqrt{H^3\iota/t^*}$. Then, by Lemma \ref{thm: value_lb1}, we have
	\begin{eqnarray}
	\overbar V^K_{h^*}(s^*,a^*)\geq \min(H,b^*_t)
	\end{eqnarray}
	Now, by Lemma \ref{thm: value_lb2}, we have that for any $(s,a)$,
	\begin{eqnarray}
	\overbar Q^K_{h^*-1}(s,a) &\geq& \sum_{i=1}^t \alpha_t^i [\Pr_{h^*-1} \overbar V^{k_i}_{h^*}](s,a)\\
	&\geq& \sum_{i=1}^t \alpha_t^i P(s^* | s,a) \min(H,b_i) \\
	&=& P(s^* | s,a) \min(H,b^*_t)
	\end{eqnarray}
	Thus, we have
	\begin{eqnarray}
	\overbar V_{h^*-1}(s) &=& \max_a  \overbar Q_{h^*-1}(s,a)\\
	&\geq& \max_a P(s^* | s,a) \min(H,b^*_t)\\
	&=& \delta_{h^*-1,h^*}(s,s^*) \min(H,b^*_t)
	\end{eqnarray}
	We now show by induction that for all $h<h^*$,
	\begin{equation}
	\overbar V_{h}(s)\geq \delta_{h, h^*}(s,s^*) \min(H,b^*_t)
	\end{equation}
	We again use Lemma \ref{thm: value_lb2} to get
	\begin{eqnarray}
	\overbar Q_{h}(s,a) &\geq& \sum_{i=1}^t \alpha_t^i [\Pr_{h} \overbar V^{k_i}_{h+1}](s,a)\\
	&\geq& \sum_{i=1}^t \alpha_t^i [\sum_{s'\in S} P(s' | s,a) \overbar V^{k_i}_{h+1}(s')]\\
	&\geq& \sum_{i=1}^t \alpha_t^i [\sum_{s'\in S} P(s' | s,a) \delta_{h+1, h^*}(s',s^*) \min(H,b^*_t)]\\
	&=& \sum_{s'\in S} P(s' | s,a) \delta_{h+1, h^*}(s',s^*) \min(H,b^*_t)
	\end{eqnarray}
	Then, 
	\begin{eqnarray}
	\overbar V_{h}(s) &=& \max_a \sum_{s'\in S} P(s' | s,a) \delta_{h+1, h^*}(s',s^*) \min(H,b^*_t)\\
	&=& \delta_{h, h^*}(s',s^*) \min(H,b^*_t),
	\end{eqnarray}
	where in the last equality, we use the Bellman optimality equation w.r.t. $\delta$, i.e. $ \delta_{h, h^*}(s',s^*) = \max_a \sum_{s'\in S} P(s' | s,a) \delta_{h+1, h^*}(s',s^*)$.
	Therefore, we have established that $\overbar V_{1}(s)\geq \delta_{1, h^*}(s,s^*) \min(H,b^*_t)$. This implies that
	\begin{eqnarray}
	\sum_{k=1}^K \overbar V^k_{1}(s_1) \geq K \delta(s^*) \min(H,b^*_t)
	\end{eqnarray}
	Furthermore, we know from the proof of Theorem \ref{thm:small_N} that 
	\begin{eqnarray}
	\sum_{k=1}^K \overbar V^k_{1}(s_1) \leq O(\sqrt{H^5SAK\iota})
	\end{eqnarray}
	When $K \geq \Omega\left(\frac{H^3SA\iota}{\delta(s^*)}\right)$, $\sum_{k=1}^K \overbar V^k_{1}(s_1) \leq K \delta(s^*)H$. Therefore, we have
	\begin{equation}
	K \delta(s^*) b^*_t  = K \delta(s^*) c\sqrt{H^3\iota/t^*} \leq O(\sqrt{H^5SAK\iota})
	\end{equation}
	This gives us
	\begin{eqnarray}
	t^* \geq O(\frac{K \delta(s^*)^2}{H^2SA}).
	\end{eqnarray}
	This holds for any $s^*,a^*,h^*$, establishing the results.
\end{proof}

\begin{proof}[\textbf{Proof of Theorem \ref{thm: model_learning}}]
	First, notice that for any given $r_h(s,a,s')$ out of a set of size $HS^2A$, by the proof of Theorem \ref{thm:small_N}, we have
	\begin{eqnarray}
	\sum_{k=1}^K (V_1^k-V_1^*)(s_1) \leq \sum_{k=1}^K \overbar V_1^k \leq O\left(\sqrt{H^5SA\iota K}\right)
	\end{eqnarray}
	Define $\tilde V_1^K = \frac{1}{K}\sum_{k=1}^K V_1^k(s_1)$, then
	\begin{equation}
	0\leq \tilde V_1^K - V_1^*(s_1) \leq O\left(\sqrt{\frac{H^5SA\iota}{K}}\right)\leq \epsilon.
	\end{equation}
	Now, let $(h^*,s^*,a^*,s'^*)$ be given. Define reward functions $R^{(1)}, R^{(2)}$ as
	\begin{eqnarray}
	R^{(1)}_h(s,a,s') &=& 
	\begin{cases}
	1, & \text{if } h = h^*, s = s^*, a = a^*, s' = s'^*\\
	0,              & \text{otherwise}
	\end{cases}\\
	R^{(2)}_h(s,a,s') &=& 
	\begin{cases}
	1, & \text{if } h = h^*, s = s^*\\
	0,              & \text{otherwise}
	\end{cases}
	\end{eqnarray}
	Then, we observe that the corresponding $V_1^{*(1)} = \delta_{h^*}(s^*)P_{h^*}(s'^*|s^*,a^*)$ and $V_1^{*(2)} = \delta_{h^*}(s^*)$. Now, define
	\begin{equation}
	\hat P(s'^*|s^*,a^*) = \frac{\tilde V_1^{K(1)}}{\tilde V_1^{K(2)}}
	\end{equation}
	Next, we show that $\|\hat P(s'^*|s^*,a^*)- P(s'^*|s^*,a^*)\|$ is small. In particular,
	\begin{eqnarray}
	\hat P(s'^*|s^*,a^*) &=& \frac{\tilde V_1^{K(1)}}{\tilde V_1^{K(2)}}\\
	&\leq& \frac{\delta_{h^*}(s^*)P_{h^*}(s'^*|s^*,a^*)+\epsilon}{\delta_{h^*}(s^*)}\\
	&=&P(s'^*|s^*,a^*) + \frac{\epsilon}{\delta_{h^*}(s^*)},\\
	\hat P(s'^*|s^*,a^*) &=& \frac{\tilde V_1^{K(1)}}{\tilde V_1^{K(2)}}\\
	&\geq & \frac{\delta_{h^*}(s^*)P_{h^*}(s'^*|s^*,a^*)}{\delta_{h^*}(s^*)+\epsilon}\\
	&\geq & \frac{\delta_{h^*}(s^*)P_{h^*}(s'^*|s^*,a^*)-\epsilon}{\delta_{h^*}(s^*)}\\
	&=& P(s'^*|s^*,a^*) - \frac{\epsilon}{\delta_{h^*}(s^*)}.
	\end{eqnarray}
	A union bound on all $(h^*,s^*,a^*,s'^*)\in [H] \times S \times A \times S$ completes the proof. Notice that the sample complexity only changes by constant factor as $\log(N) = \log(HS^2A)\leq 2 \log(HSA)$.
\end{proof}

\section{Proof of Lower Bound}
We based our construction on the classic lower-bound construction for multi-armed bandits. For a detailed introduction of the problem setting, please refer to \cite{mannor2004sample}.
We first introduce some bandit notation: let $n$ be the number of arms, $p\in [0,1]^n$ represent the parameters of the Bernoulli distribution of rewards associated with each arm. We let $T_\ell$ be the total number of times that arm $\ell$ is pulled, and $T = \sum_{\ell=1}^n T_\ell$ be the total number of arm pulls. We also let $I$ be the arm that is selected at the end of the exploration phase.

\begin{lemma}\label{thm: logn}
	There exists a $p\in [0,1]^n$, $n\geq 2$ such that for any fixed number of episodes $K$, there exists $N=O(2^K)$ reward functions, so that with probability at least 0.5, no RL algorithm can learn an $\epsilon$-optimal policy with $\epsilon\leq 0.08$ for at least one reward function.
\end{lemma}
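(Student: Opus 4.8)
The plan is to run the classical change-of-measure lower bound for best-arm identification, but simultaneously against many reward functions that all share a single uninformative ``exploration'' bandit. I would take $p=(\tfrac12,\dots,\tfrac12)$, the $n$-armed Bernoulli bandit in which no arm carries any signal, with $n$ chosen proportional to $K\epsilon^2$; since the exploration phase observes no rewards, the pull counts $T_1,\dots,T_n$ (with $T=\sum_\ell T_\ell\le K$) are fixed before any reward is seen, so I may treat them as deterministic. I would index $N:=2^n$ candidate reward functions by subsets $J\subseteq[n]$: under $r^{(J)}$, arm $\ell$ is $\mathrm{Ber}(\tfrac12+2\epsilon)$ if $\ell\in J$ and $\mathrm{Ber}(\tfrac12)$ otherwise, so $r^{(\emptyset)}=p$. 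For $J\ne\emptyset$ the only $\epsilon$-optimal arms under $r^{(J)}$ are the arms in $J$, because every arm outside $J$ has suboptimality gap $2\epsilon>\epsilon$; under $r^{(\emptyset)}$ every arm is $\epsilon$-optimal. The one modelling point to nail down is that the learner is never told which reward function a given task's samples come from, so by a standard relabelling (Yao) argument I would assume the arm $I^{(J)}$ returned for task $J$ is produced by one randomised rule $g$ applied to that task's reward samples alone.

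Next I would carry out the comparison between tasks $\emptyset$ and $\{\ell\}$ for each fixed $\ell$. Their full-sample laws $P_{\emptyset},P_{\{\ell\}}$ are product measures that agree on every coordinate except the $T_\ell$ draws of arm $\ell$, so by the KL chain rule and Pinsker's inequality,
\[
\mathrm{TV}\!\big(P_{\{\ell\}},P_{\emptyset}\big)\le\sqrt{\tfrac12\,T_\ell\,D_{\mathrm{KL}}\!\big(\mathrm{Ber}(\tfrac12+2\epsilon)\,\big\|\,\mathrm{Ber}(\tfrac12)\big)}\le 2\sqrt2\,\epsilon\sqrt{T_\ell},
\]
where I use $D_{\mathrm{KL}}(\mathrm{Ber}(a)\|\mathrm{Ber}(\tfrac12))\le4(a-\tfrac12)^2$. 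Hence $\Pr_{r^{(\{\ell\})}}[I^{(\{\ell\})}=\ell]\le\Pr_{r^{(\emptyset)}}[g=\ell]+2\sqrt2\,\epsilon\sqrt{T_\ell}$; summing over $\ell\in[n]$, using $\sum_\ell\Pr_{r^{(\emptyset)}}[g=\ell]=1$ (the rule returns one arm) and Cauchy--Schwarz $\sum_\ell\sqrt{T_\ell}\le\sqrt{nK}$, gives
\[
\sum_{\ell=1}^{n}\Pr_{r^{(\{\ell\})}}\!\big[I^{(\{\ell\})}=\ell\big]\le 1+2\sqrt2\,\epsilon\sqrt{nK}
\]
(with an expectation over the pull counts if exploration randomises them).

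To finish, I would set $n:=\max\{4,\lceil K\epsilon^2/\gamma\rceil+1\}$ with the absolute constant $\gamma:=0.08^2$, so that $K<\gamma n/\epsilon^2$ and the right-hand side above is $\le 1+2\sqrt2\,\sqrt\gamma\,n$, which is $<n/2$ for all $n\ge4$ since $2\sqrt2\cdot0.08<\tfrac14$. Therefore some $\ell^\star$ has $\Pr_{r^{(\{\ell^\star\})}}[I^{(\{\ell^\star\})}=\ell^\star]<\tfrac12$, and as $\ell^\star$ is the unique $\epsilon$-optimal arm under $r^{(\{\ell^\star\})}$ the learner fails to return an $\epsilon$-optimal policy for that reward function with probability at least $\tfrac12$. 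Finally $N=2^n=O\big(2^{K\epsilon^2/\gamma}\big)=O(2^K)$, because $\epsilon\le0.08$ forces $\epsilon^2\le\gamma$; the instance $p$ promised by the lemma is this all-$\tfrac12$ bandit, and $n\ge4\ge2$.

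The hard part is the modelling reduction flagged in the first paragraph: were the learner able to read off which reward function indexes a task it would simply report that reward's best arm and never fail, so one must argue carefully---via a uniform prior over relabellings of the task and arm indices together with Yao's principle, which is standard for such lower bounds---that this information is unavailable under the task-agnostic protocol. The rest (the chain-rule/Pinsker two-point bound, the Cauchy--Schwarz summation, and the arithmetic ensuring $2\sqrt2\sqrt\gamma<\tfrac14$ and $\epsilon^2\le\gamma$ simultaneously for $\epsilon\le0.08$) is routine bookkeeping.
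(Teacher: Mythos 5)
Your proposal does not prove the lemma as stated, and it bypasses the one idea the paper's proof actually turns on. First, the quantifier order: the lemma fixes a single bandit instance $p$ (with a fixed number of arms $n\ge 2$) and then asserts failure for \emph{every} budget $K$ once $N=O(2^K)$ reward functions are allowed; in the paper $n=2$ throughout. Your instance has $n=\max\{4,\lceil K\epsilon^2/\gamma\rceil+1\}$ arms, i.e.\ it changes with $K$. With $n$ and $\epsilon$ fixed, your Pinsker bound $1+2\sqrt2\,\epsilon\sqrt{nK}$ is vacuous as soon as $K\gtrsim n/\epsilon^2$, and for a fixed instance a standard algorithm really does succeed on all $2^n$ of your reward functions once $K$ is large enough. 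So what you establish is the classical small-budget best-arm-identification lower bound, not the ``for any fixed $K$'' claim. Second, and relatedly, your argument never uses the task-agnostic structure: only the $n+1$ tasks $\emptyset,\{1\},\dots,\{n\}$ enter the change-of-measure step, each a \emph{distinct} reward function, and the exponential $N=2^n$ is pure padding that plays no statistical role.

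The paper's proof is driven precisely by multiplicity of \emph{identical} tasks with independent instantiated rewards. It uses a fixed two-armed instance: under $p$, arm $2$ gives deterministic reward $0$; under $q$, arm $2$ is $\mathrm{Bernoulli}(1/2)$; the $N$ tasks consist of one copy of $p$ and $N-1$ independent copies of $q$. With $N=\lceil 1+2^K\ln 2\rceil$, the probability that at least one copy of $q$ instantiates all-zero rewards on arm $2$ --- and is therefore \emph{literally indistinguishable} from the data of task $p$ --- is at least $1/2$; the learner must then assign those two tasks the same (stochastic) policy, and a short computation shows one of the two incurs a gap of at least $0.08$. This exact-coincidence (birthday-type) argument is what makes the statement hold for every $K$ with a fixed instance, and it is exactly the content consumed downstream, where the lemma is invoked in the proof of Theorem~\ref{thm: bandit} to force $T_1\ge\log_2 N$; a TV-closeness argument of your type cannot supply either. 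Finally, the ``hard part'' you flag (a Yao/relabelling reduction) is not where the difficulty lies: in the task-agnostic protocol the output for a task is by definition a measurable function of the exploration data and that task's sampled rewards only, so no such reduction is needed --- the missing ingredient is the coincidence argument above, not a randomization argument.
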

\begin{proof}
	We construct a bandit with two arms $\ell = 1, 2$.  We consider two reward functions. The first reward function is $p$ with $p_1=0.1, p_2=0$ and the second reward function is $q$ with $q_1=0.1$, $q_2=Bernoulli(0.5)$. Thus, it is easy to see that the optimal arm corresponding to $p$ and $q$ are $\ell = 1$ and $\ell=2$ respectively. We assume among the $N$ reward functions we need to learn, $N-1$ of them are $q$ and only one is $p$. Next, we show that no learner is able to distinguish whether the instantiated rewards are from $p$ or $q$.
	
	Let $T_2$ be the number of episodes where arm $2$ is taken in the $K$ instantiated rewards. Then for each of the $N-1$ reward function $q$, it has probability $0.5^{T_2}$ to generate the same instantiated rewards with $r_1$. Note that $0.5^{T_2}\ge 0.5^K$, so the probability that at least one of the $q$ generate the same instantiated rewards as $p$ is at least
	\begin{equation}
	1-(1-0.5^{K})^{N-1}\ge 1-e^{-0.5^K(N-1)}
	\end{equation}
	Let $N=\lceil1+2^K\ln2\rceil$, then the probability that the rewards can be generated by one of the $q$ is at least $0.5$. Given such a reward configuration, let $\hat\pi=(x, 1-x)$ be the learned (stochastic) policy where $x$ is the probability of choosing arm $1$. Then for reward function $q$, the optimality gap is
	\begin{equation}
	V_2^*-V_2(\hat\pi)=0.5-0.1x-(1-x)*0.5=0.4x,
	\end{equation}
	while for reward function $r_1$, the optimality gap is
	\begin{equation}
	V_1^*-V_1(\hat\pi)=0.1-0.1x.
	\end{equation}
	One can see that regardless of $p_1$, one of the above two gaps will be large, and the minimum of $\max(V_2^*-V_2(\hat\pi),V_1^*-V_1(\hat\pi))$ is achieved when $p_1=0.2$, and the minimum value is $0.08$.
	
	Therefore with probability at least 0.5, no RL algorithm can learn $\epsilon$-optimal policy with $\epsilon=0.08$.
\end{proof}

\begin{theorem}\label{thm: bandit}
	There exist some positive constant $c_1$, $c_2$, $\epsilon_0$, $\delta_0$, such that for every $n\geq 2$, $\epsilon\in (0,\epsilon_0)$, and $\delta \in (0,\delta_0)$, and for every $(\epsilon,\delta)$-correct policy on $N$ tasks, there exists some $p\in [0,1]^n$ such that 
	\begin{equation}
	\E{p}{T}\geq c_1\frac{n}{\epsilon^2}\log\frac{c_2N}{\delta}
	\end{equation}
\end{theorem}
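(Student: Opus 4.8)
The plan is to split the target bound into two pieces that can be handled by separate, standard arguments, and then recombine. Write $T=\sum_{\ell=1}^{n}T_\ell$; since the exploration phase never observes rewards (and a bandit has no state feedback), the pull counts $T_\ell$, hence $T$, depend only on the algorithm's internal randomness and have the same law under every instance. It therefore suffices to exhibit one $N$-task instance on which $\mathbf{E}[T]\ge c\,n\epsilon^{-2}\log(1/\delta)$ and one on which $\mathbf{E}[T]\ge c'\,n\epsilon^{-2}\log N$: taking the larger and using $\max(A,B)\ge(A+B)/2$ gives $\mathbf{E}[T]\ge\tfrac12\min(c,c')\,n\epsilon^{-2}\bigl(\log(1/\delta)+\log N\bigr)\ge c_1\,n\epsilon^{-2}\log(c_2 N/\delta)$. (If one insists on allowing reward-adaptive exploration, the same averaging is applied instance by instance.) The $\log(1/\delta)$ piece is exactly the classical bandit lower bound of \cite{mannor2004sample}: reward-free exploration is only more constrained than the reward-observing model in which the $\Omega(n\epsilon^{-2}\log(1/\delta))$ bound was established, so that bound applies here, and to place it in the $N$-task framework one simply lets task $1$ carry the hard profile (all arms $\mathrm{Bern}(\tfrac12)$ except one at $\tfrac12+2\epsilon$) while tasks $2,\dots,N$ carry the identically-zero reward, which is vacuous, so $(\epsilon,\delta)$-correctness on this $N$-task instance is equivalent to $(\epsilon,\delta)$-correctness on the single hard bandit.

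All the new content is in the $\log N$ piece, a quantitative strengthening of \lemref{thm: logn}. Fix an arm $\ell\in\{2,\dots,n\}$ and consider the family of instances indexed by a pivot position $m\in[N]$: task $m$ receives the profile $\rho^{0}$ (arm $1$ at $\tfrac12+3\epsilon$, arm $\ell$ at $\tfrac12$, all others at $\tfrac12$, so the unique $\epsilon$-optimal arm is $1$), while every other task receives $\rho^{\ell}$ (the same profile but with arm $\ell$ raised to $\tfrac12+6\epsilon$, so its unique $\epsilon$-optimal arm is $\ell$). The two profiles differ only on arm $\ell$, and there only in the harder $\tfrac12$-versus-$(\tfrac12+\Theta(\epsilon))$ regime, whose per-pull KL divergence is $\Theta(\epsilon^2)$. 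A correct policy must favor arm $1$ on the pivot task and arm $\ell$ on every other task, which are incompatible, so from its output one can read off a guess $\widehat m$ of the pivot with $\Pr[\widehat m\neq m]\le\delta$ for every $m$; thus the policy must solve an $N$-ary hypothesis test in which the only informative observations are the $T_\ell$ pulls of arm $\ell$ inside each task. Placing a uniform prior on $m$ and applying Fano's inequality, $\delta\ge 1-(I(m;\mathrm{data})+\log 2)/\log N$, so it remains to show $I(m;\mathrm{data})\lesssim\mathbf{E}[T_\ell]\,\epsilon^{2}$ up to lower-order terms; this forces $\mathbf{E}[T_\ell]=\Omega(\epsilon^{-2}\log N)$, and summing over $\ell\in\{2,\dots,n\}$ — legitimate precisely because the $T_\ell$ do not depend on the instance — gives $\mathbf{E}[T]=\Omega(n\epsilon^{-2}\log N)$, as needed.

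The step I expect to be the main obstacle is this mutual-information estimate. Conditioning on $T_\ell=t$ and keeping only arm $\ell$'s samples, $(x_1,\dots,x_N)$ is conditionally independent given $m$, so the standard tensorization bound gives $I(m;\mathrm{data}\mid T_\ell=t)\le\sum_i I(m;x_i\mid T_\ell=t)=N\,I(m;x_1\mid T_\ell=t)$, and the per-coordinate term splits as $\tfrac{1}{N}\mathrm{KL}(\nu_0^{\otimes t}\Vert\bar\nu)+\tfrac{N-1}{N}\mathrm{KL}(\nu_1^{\otimes t}\Vert\bar\nu)$, with $\nu_0=\mathrm{Bern}(\tfrac12)$, $\nu_1=\mathrm{Bern}(\tfrac12+6\epsilon)$ and $\bar\nu$ their $(\tfrac{1}{N},\tfrac{N-1}{N})$-mixture; the first divergence is at most $\min\bigl(\log N,\ O(1)+t\,\mathrm{KL}(\nu_0\Vert\nu_1)\bigr)=\min(\log N,\,O(1)+\Theta(t\epsilon^2))$, and the second is bounded through $\chi^2$ by $\tfrac{2}{N^2}\bigl((1+\chi^2(\nu_0\Vert\nu_1))^t-1\bigr)=\tfrac{2}{N^2}(e^{\Theta(t\epsilon^2)}-1)$. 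Summing over $i$ and taking expectation over the random $T_\ell$ (its upper tail handled by Markov), the only genuine care is to keep this $\chi^2$-type error term $o(\log N)$ throughout the regime $t\epsilon^2\lesssim\log N$, which goes through once $\epsilon$ is below an absolute threshold $\epsilon_0$ and the constants are tracked. Everything else is routine: choosing $\epsilon_0,\delta_0$ and the $3\epsilon,6\epsilon$ gaps so that the designated arms are genuinely the unique $\epsilon$-optimal arms with all $O(\cdot)$ slack absorbed, handling small $N$ (where the $\log N$ term is dominated by the Mannor--Tsitsiklis term anyway), and observing that it is the reward-free nature of exploration that lets one add up $\mathbf{E}[T_\ell]$ over arms without a further \cite{mannor2004sample}-style averaging over $\ell$.
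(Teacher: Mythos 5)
Your proposal is correct for the task-agnostic bandit model used here, but it takes a genuinely different route from the paper. The paper obtains the $\log(N/\delta)$ dependence in one shot by re-running the Mannor--Tsitsiklis change-of-measure argument with the alternative hypothesis instantiated independently on all $N$ tasks: the likelihood of the observed history under ``some copy of $H_1$'' is $1-(1-L_1(W))^N\gtrsim N\,L_1(W)$, which boosts the usual likelihood ratio by a factor $N$ (Lemma~\ref{thm: logn} is invoked precisely to guarantee $L_1(W)\le 1/N$ so the boost is not capped), and the contradiction with $(\epsilon/2,\delta)$-correctness then gives $\E{}{T_\ell}\gtrsim \epsilon^{-2}\log(N/\delta)$ arm by arm. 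You instead split $\log(N/\delta)$ into $\log(1/\delta)+\log N$, cite \cite{mannor2004sample} as a black box for the first piece, and prove the second via a pivot-task construction plus Fano; the mutual-information step you flag as the main obstacle is in fact immediate, since $\bar\nu\ge\frac{N-1}{N}\nu_1^{\otimes t}$ gives $\mathrm{KL}(\nu_1^{\otimes t}\,\|\,\bar\nu)\le\log\frac{N}{N-1}$ and $\mathrm{KL}(\nu_0^{\otimes t}\,\|\,\bar\nu)\le\log\frac{N}{N-1}+t\,\mathrm{KL}(\nu_0\,\|\,\nu_1)$, so no $\chi^2$ bookkeeping is needed, and conditioning on $T_\ell$ (which is independent of the instance) lets you drop the Markov step and conclude $\E{}{T_\ell}=\Omega(\epsilon^{-2}\log N)$ directly. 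Two points should be made explicit: reading the pivot $\widehat m$ off the outputs requires the simultaneous notion of $(\epsilon,\delta)$-correctness (all $N$ tasks correct with probability $1-\delta$), which is how the theorem is used in the paper but deserves a sentence; and your argument leans on reward-oblivious exploration in two places---combining the two pieces through the instance-independence of $\E{}{T}$, and summing $\E{}{T_\ell}$ over arms without an MT-style ``many arms are under-pulled'' step---so the parenthetical claim that the same averaging handles reward-adaptive exploration is not justified as written, whereas the paper's change-of-measure proof, inherited from \cite{mannor2004sample}, tolerates adaptive arm choices. In exchange, your route is more modular and uses only standard tools (black-box MT plus Fano), avoiding the paper's somewhat delicate $L_1^N/L_0$ manipulation and its auxiliary Lemma~\ref{thm: logn}.
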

\begin{proof}
	The proof largely mimic the original proof of Theorem 1 in \cite{mannor2004sample}, with the distinction in handling $N$ tasks instead of $1$. Consider a bandit problem with $n+1$ arms. We also consider a finite set of $n+1$ possible reward functions $p$, which we refer to as ``hypotheses''. Under any one of the hypothesis, arm $0$ has a Bernoulli reward with $p_0 = (1+\epsilon)/2$. Under one hypothesis, denoted $H_0$, all other arm has $p_i = 1/2$, which makes arm $0$ the best arm. Furthermore, for $\ell = 1,...,n$, there is a hypothesis
	\begin{equation}
	H_\ell: p_0 = \frac{1+\epsilon}{2},\;\; p_\ell = \frac{1}{2} + \epsilon, \;\;p_i = \frac{1}{2}, \mbox{for } i\neq 0,\ell.
	\end{equation}
	which makes arm $\ell$ the best arm.
	We define $\epsilon_0 = 1/8$ and $\delta_0 = e^{-4}/8$. From now on, we fix $\epsilon\in (0,\epsilon_0)$, $\delta \in (0,\delta_0)$, $N\geq 1$ and a policy, which we assume to be $(\epsilon/2,\delta)$-correct on $N$ rewards.
	If $H_0$ is true, the policy must have a probability at least $1-\delta$ of eventually stopping and selecting arm $0$. If $H_\ell$ is true, for some $\ell\neq 0$, the policy must have a probability at least $1-\delta$ of eventually stopping and selecting arm $\ell$. These further hold simultaneously for $N$ hypotheses. We denote $P_\ell^N(\cdot)$ as the probability of some event that happens simultaneously under $N$ $H_\ell$ hypotheses.
	
	We define $t^*$ by
	\begin{equation}
	t^* = \frac{1}{c\epsilon^2}\log\frac{N}{8\delta} = \frac{1}{c\epsilon^2}\log\frac{N}{\theta}
	\end{equation}
	where $\theta = 8\delta$ and $c$ is an absolute constant we will specify later. Note that $\theta<e^{-4}$ and $\epsilon\leq 1/4$.
	
	We assume by contradtion that $\E{}{T_1}\leq t^*$. We will eventually show that under this assumption, the probability of selecting $H_0$ under one of $N$ $H_1$ exceeds $\delta$, thus violates $(\epsilon/2,\delta)$-correctness.
	
	We now introduce some special events A, B and C. We define
	\begin{eqnarray}
	A &=& \{T_1\leq 4t^*\}\\
	B &=& \{I=0\mbox{, i.e. the policy eventually pick arm 0}\}\\
	C &=& \left\{\max_{1\leq t\leq 4t^*} |K_t-\frac{1}{2}t|<\sqrt{t^*\log(N/\theta)}\right\}
	\end{eqnarray}
	where $K_t$ is the number of getting reward $1$ if the first arm is pulled $t$ times.
	Similar to the original proof \cite{mannor2004sample}, we have the following lemmas.
	\begin{lemma}
		$P_0^N(A) = P_0(A)>3/4$, where $P^N_0(C)$ denotes the probability of event $B$ under all of $N$ hypothesis $H_0$.
	\end{lemma}
	This is directly due to the definition of $A$ that is independent of rewards and the use of Markov inequality.
	\begin{lemma}
	$P^N_0(B)>3/4$.
	\end{lemma}
This is due to $\delta<e^{-4}/8<1/4$.
	\begin{lemma}
		$P^N_0(C)>3/4$.
	\end{lemma}
	
	This is due to the observation that $K_t-t/2$ is a martingale, and by applying Kolmogorov's inequality.
	\begin{lemma}\label{lemma: arith}
		If $0\leq x\leq 1$ and $y \geq 0$, then 
		\begin{equation}
		(1-x)^y\geq e^{-dxy}
		\end{equation}
		where $d = 1.78$
	\end{lemma}
This is straightforward arithmetics. Please refer to the original proof in \cite{mannor2004sample} for the detailed proofs of the lemmas. Let $S = A \cap B \cap C$, then we have $P^N_0(S) > 1/4$. Now we are ready to prove our main results.
Let $W$ be the history of the process (the number of arm pulls for each arm in the exploration phase, and the sampled rewards in the policy-optimization phase). We define $L_\ell(W)$ to be the likelihood of a history $W$ under reward function $\ell$. We denote $K$ be a shorthand notation for $K_{T_1}$, the number of reward $1$ instantiated on arm $\ell=1$.
Observe that, given the history up to time $t -1$, the arm choice at time $t$ has the same probability
distribution under either hypothesis $H_0$ and $H_1$; similarly, the arm reward at time $t$ has the same
probability distribution, under either hypothesis, unless the chosen arm was arm $1$. For this reason, the likelihood ratio $L_1(W)/L_0(W)$ is given by 
\begin{eqnarray}
\frac{L_1(W)}{L_0(W)} &=& \frac{({1\over 2}+\epsilon)^K({1\over 2}-\epsilon)^{T_1-K}}{({1\over 2})^{T_1}}\\
&=& (1-4\epsilon^2)^K(1-2\epsilon)^{T_1-2K}
\end{eqnarray}
Let $T^N_1(W)$ be the likelihood that $W$ appears under one of $N$ hypothese $H_1$. Since the instantiation of rewards under each hypothesis is completely independent from one another, we have
\begin{eqnarray}
L^N_1(W) &=& 1-(1-L_1(W))^N\\
&\geq& 1- \frac{1}{1+L_1(W)N}\\
&=& \frac{L_1(W)N}{1+L_1(W)N}
\end{eqnarray}
By lemma \ref{thm: logn}, we have that in order for the policy to be $\epsilon,\delta$-correct, $T_1\geq \log_2(N)$. Thus, we have
\begin{eqnarray}
L_1(W) &\leq& ({1\over 2}+\epsilon)^K({1\over 2}-\epsilon)^{T_1-K}\\
&\leq& (\frac{1}{2})^{T_1}\\
&\leq& \frac{1}{N}
\end{eqnarray}
We then have
\begin{eqnarray}
\frac{L_1^N(W)}{L_0(W)} & = & \frac{L_1(W)N}{1+L_1(W)N} \frac{1}{L_0(W)}\\
&\geq& \frac{N}{2} \frac{L_1(W)}{L_0(W)}\\
&=& \frac{N}{2}  (1-4\epsilon^2)^K(1-2\epsilon)^{T_1-2K}
\end{eqnarray}
If event $S$ occurred, then $A$ occurred, and we have $K\leq T_1\leq 4t^*$, so that 
\begin{eqnarray}
(1-4\epsilon^2)^K\geq (1-4\epsilon^2)^{4t^*} &=& (1-4\epsilon^2)^{\frac{1}{c\epsilon^2}\log\frac{N}{\theta}} \\
&\geq& e^{-(16d/c)\log(N/\theta)}\\
&=& (\theta/N)^{16d/c}
\end{eqnarray}
We have used here Lemma \ref{lemma: arith}, which applies because $4\epsilon^2<4/4^2<1/\sqrt{2}$.
Similarly, if event S has occurred, then $A \cap C$ has occurred, which implies
\begin{equation}
T_1-K \leq 2\sqrt{t^*\log(N/\theta)} = (2/\epsilon\sqrt{c})\log(N/\theta).
\end{equation}
Therefore,
\begin{eqnarray}
(1-2\epsilon)^{T1-2K} &\geq& (1-2\epsilon)^{(2/\epsilon\sqrt{c})\log(N/\theta)}\\
&\geq& e^{-(4d/\sqrt{c}\log(N/\theta))}\\
&=& (\theta/N)^{4d/\sqrt{c}}
\end{eqnarray}
Substituting the above into the main equation, we obtain
\begin{equation}
\frac{L_1^N(W)}{L_0(W)} \geq \frac{N}{2} (\theta/N)^{(16d/c)+4d/\sqrt{c}}
\end{equation}
By picking $c$ large enough ($c=100$ suffices), we obtain that $\frac{L_1^N(W)}{L_0(W)}\geq \theta/2 \geq 4\delta$ whenever the event $S$ occurs. More precisely, we have 
\begin{equation}
\frac{L_1^N(W)}{L_0(W)}\ind{S}\geq 4\delta \ind{S}
\end{equation}
where $\ind{S}$ iss the indicator function of the event $S$. Then,
\begin{eqnarray}
P^N_1(B) \geq P^N_1(S) = \mathbb{E}_1^N[\ind{S}] = \mathbb{E}_0^N[\frac{L_1^N(W)}{L_0(W)}\ind{S}]\geq \mathbb{E}_0^N[4\delta \ind{S}] = 4\delta P^N_0(S)> \delta.
\end{eqnarray}
where we used the fact that $P^N_0(S)>1/4$. This contradict the assumption that the policy is $(\epsilon/2,\delta)$-correct. Similarly, we must have $\E{}{T_\ell}>t^*$, for all arms $\ell>0$. Therefore, if we have an $(\epsilon,\delta)$-correct policy, we must have $\E{}{T}>(n/(4c\epsilon^2))\log(N/8\delta)$, which is of the desired form.

\end{proof}
Now we are ready to prove theorem \ref{thm: lowerbound}.
\begin{proof}[\textbf{Proof of theorem \ref{thm: lowerbound}}]
	We consider an MDP $M$ where the transition is defined as $P_h(s'|s,a) = 1/S$ for all $(h,s,a,s')$ and \textbf{is known} to the learner. Since the action has no control over the next-state, this is equivalent to a collection of $SH$ multi-armed bandits. Due to the uniform transition, $P^\pi_h(s) = 1/S$ for any $\pi,s,h$, and so finding the $\epsilon$-optimal policy amounts to finding an $\epsilon_{s,h}$-optimal policy for each bandit $(s,h)$, such that $\sum_{s,h} \epsilon_{s,h} = S\epsilon$. Theorem \ref{thm: bandit} implies that it takes at least $\Omega( A\log(N/p)/\epsilon_{s,h}^2)$ visits to a bandit $s,h$ to find an $\epsilon_{s,h}$-optimal action simultaneously for each of $N$ reward functions with probability at least $1-p$. It follows that the total number of samples required $\Omega(\sum_{s,h} A\log(N/p)/\epsilon_{s,h}^2)$ is minimized when $\epsilon_{s,h}=\epsilon/H$ for all $(s,h)$, which gives a total of at least $\Omega(H^3SA\log(N/p)/\epsilon^2))$ samples, which translates to at least $\Omega(H^2SA\log(N/p)/\epsilon^2))$ episodes.
\end{proof}

\section{Proof of $N$-independent upper bound of \textsc{UCBZero} in the Reward-free Setting}
\begin{proof}[\textbf{Proof of Theorem \ref{thm:large_N}}]
	Fixing the transition kernel, we consider dividing all possible MDPs into a set of equivalence classes based on different reward patterns. Specifically, given any $M\in \Z^{+}$, we split the support of reward $[0,1]$ into $M$ segments, $I_i=[\frac{i-1}{M}, \frac{i}{M}]$, $\forall 1\le i\le M$. For any MDP, the reward function $r_h(s, a)$ depends only on state $s$ and action $a$, and for each $(s,a)$ pair, the corresponding reward must lie in one of the $M$ segments, thus there are $M^{|S|\times |A|}$ different patterns of reward functions for each step $h$, characterized by a matrix $\Phi_h\in[M]^{|S|\times |A|}$, where each entry $\Phi_h(i,j)\in[M]$ is the segment that $r_h(i,j)$ lies in. Given that we have $H$ steps, in total we will have $M^{|S|\times |A|\times H}$ different reward patterns, denoted as $\Phi=\prod_{h=1}^H \Phi_h$.
	For each $\Phi$, we next show that learning any single reward function $r\in\Phi$ is enough to cover all other reward functions in $\Phi$. Specifically, assume we have learned a near-optimal policy $\pi_r$ that satisfies
	\begin{equation}
	V^*_r(s_1)-V^{\pi_r}_r(s_1)<\epsilon,
	\end{equation}
	where subscript $r$ means the value function under reward function $r$ and $V^{\pi_r}_r$ is the value function of the learned policy.
	Then for any other $r^\prime\in \Phi$ different from $r$, we have
	\begin{equation}\label{decomp}
	V^*_{r^\prime}-V^{\pi_r}_{r^\prime}=V^*_{r^\prime}-V^*_{r}+V^*_{r}-V^{\pi_r}_r+V^{\pi_r}_r-V^{\pi_r}_{r^\prime}.
	\end{equation}
	Note that
	\begin{equation}\label{first_bound}
	\begin{aligned}
	V^*_{r^\prime}-V^*_{r}&=\max_{\pi}\E{\pi}{\sum_{h=1}^H r^\prime_h(s_h, a_h)}-\max_{\pi}\E{\pi}{\sum_{h=1}^H r_h(s_h, a_h)}\\
	&\le \max_{\pi}\E{\pi}{\sum_{h=1}^H r_h^\prime(s_h, a_h)-r_h(s_h, a_h)}\le \frac{H}{M},
	\end{aligned}
	\end{equation}
	where the last inequality is due to $r^\prime_h$ and $r_h$ lie in the same segment for all $h$. Same result holds for $V^{\pi_r}_r-V^{\pi_r}_{r^\prime}$. Let $M=\frac{H}{\epsilon}$. Then plug~\eqref{first_bound} back to~\eqref{decomp}, and also remember that $V^*_r(s_1)-V^{\pi_r}_r(s_1)<\epsilon$, thus we have
	\begin{equation}
	V^*_{r^\prime}-V^{\pi_r}_{r^\prime}<\frac{H}{M}+\epsilon+\frac{H}{M}=\frac{2H}{M}+\epsilon=3\epsilon,
	\end{equation}
	which shows that the policy learned on reward function $r$ is also near-optimal for other reward functions in the same equivalence class. Given that, it suffices for our UCBZero to successfully learn a total of $M^{|S|\times |A|\times H}$ reward functions in order to cover all possible MDPs. Then simply applying the conclusion in~\thmref{thm:small_N} concludes the proof.
\end{proof}
\end{document}